
\documentclass[10pt,twocolumn,letterpaper]{article}

\usepackage[pagenumbers]{cvpr} 

\usepackage{graphicx}
\usepackage{amsmath}
\usepackage{amssymb}
\usepackage{booktabs}

\usepackage{multirow}
\usepackage{color,colortbl}
\definecolor{blush}{rgb}{0.87, 0.36, 0.51}
\definecolor{bblue}{rgb}{0.36, 0.51, 0.87}
\definecolor{ggray}{rgb}{0.88, 0.87, 0.87}
\newcolumntype{a}{>{\columncolor{ggray}}c}
\usepackage{flushend}
%
\usepackage[pagebackref,breaklinks,colorlinks]{hyperref}

\usepackage[capitalize]{cleveref}
\crefname{section}{Sec.}{Secs.}
\Crefname{section}{Section}{Sections}
\Crefname{table}{Table}{Tables}
\crefname{table}{Tab.}{Tabs.}

\usepackage{bbm}
\newtheorem{theorem}{Theorem}
\newtheorem{lemma}[theorem]{Lemma}
\newenvironment{proof}{\paragraph{\emph{Proof}:}}{\hfill$\square$}
\usepackage[dvipsnames, svgnames, x11names]{xcolor}

\begin{document}
	
	\title{UMAD: Universal Model Adaptation under Domain and Category Shift}
	
	\author{Jian Liang
				 $^{1,2}$
				\qquad 
				Dapeng Hu\thanks{The first two authors share co-first authorship.} $^{3}$
				\qquad
				Jiashi Feng $^{4}$
				\qquad
				Ran He $^{1,2}$\\
				$^1$ Institute of Automation, Chinese Academy of Sciences (CASIA)
				\quad
				$^2$ University of Chinese \\ Academy of Sciences (UCAS)
				\quad
				$^3$ National University of Singapore (NUS)
				\quad
				$^4$ Sea AI Lab (SAIL) \\
				{\tt\small \color{black}{liangjian92@gmail.com}}\qquad
				{\tt\small dapeng.hu@u.nus.edu}\qquad
				{\tt\small jshfeng@gmail.com}\qquad
				{\tt\small rhe@nlpr.ia.ac.cn}
	}
	\maketitle
	
	\begin{abstract}
		Learning to reject unknown samples (not present in the source classes) in the target domain is fairly important for unsupervised domain adaptation (UDA).
		There exist two typical UDA scenarios, \ie, open-set, and open-partial-set, and the latter assumes that not all source classes appear in the target domain.
		However, most prior methods are designed for one UDA scenario and always perform badly on the other UDA scenario.
		Moreover, they also require the labeled source data during adaptation, limiting their usability in data privacy-sensitive applications. 
		To address these issues, this paper proposes a Universal Model ADaptation (UMAD) framework which handles both UDA scenarios without access to the source data nor prior knowledge about the category shift between domains. 
		Specifically, we aim to learn a source model with an elegantly designed two-head classifier and provide it to the target domain.
		During adaptation, we develop an informative consistency score to help distinguish unknown samples from known samples.
		To achieve bilateral adaptation in the target domain, we further maximize localized mutual information to align known samples with the source classifier and employ an entropic loss to push unknown samples far away from the source classification boundary, respectively. 
		Experiments on open-set and open-partial-set UDA scenarios demonstrate that UMAD, as a unified approach without access to source data, exhibits comparable, if not superior, performance to state-of-the-art data-dependent methods.
		
	\end{abstract}
	
	\begin{figure}[t]
		\begin{center}
			\includegraphics[width=0.48\textwidth, trim=0 0 0 0,clip]{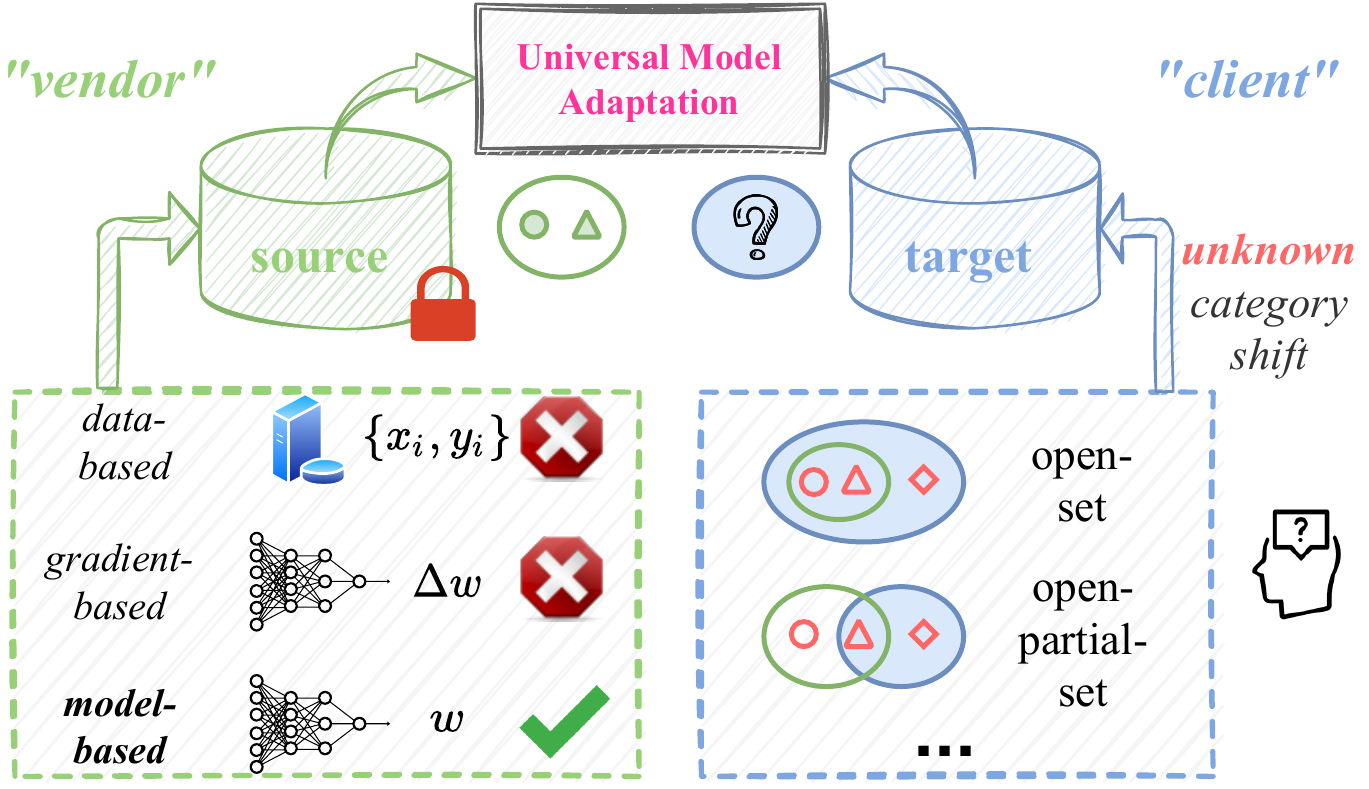}
			\caption{To pursue a broad range of applicability, we consider a challenging problem where the trained model instead of the data is provided from the source domain and meanwhile the detailed category shift (\ie, open-set or open-partial-set) is unknown.}
			\label{fig:framework}
		\end{center}
	\vspace{-20pt}
	\end{figure}
	
	\section{Introduction}
	Benefiting from the increasing number of labeled data, 
	deep learning has achieved great success, particularly in supervised learning.
	However, annotating data in a new situation is always time-cost and expensive \eg semantic segmentation in autonomous driving \cite{feng2020deep}.
	Thus, researchers resort to an alternative solution by fully exploiting knowledge from data or models available in similar domains (referred to transfer learning (TL) \cite{pan2009survey}).
	As a particular case of TL, single-source unsupervised domain adaptation (UDA) \cite{ben2010theory} attracts increasing attention over the last decade, which has been successfully applied in many real-world applications, \eg, image classification \cite{ganin2015unsupervised,long2015learning}, semantic segmentation \cite{tsai2018learning,hoffman2018cycada}, and object detection \cite{chen2018domain,kim2019diversify}.
	
	Typically, a UDA problem involves two related but not identical domains: source and target domains.
	The goal of UDA is to recognize the unlabeled samples in the target domain by leveraging knowledge from labeled data in the source domain.
	For a better illustration, we denote $\mathcal{C}_s$ and $\mathcal{C}_t$ as the label set of the source domain and the target domain, respectively.
	This well-studied identical label space assumption \cite{ganin2015unsupervised} ($\mathcal{C}_s = \mathcal{C}_t$, known as closed-set) is recently extended to various asymmetric settings, \eg, partial-set \cite{cao2018partial} ($\mathcal{C}_s \supset \mathcal{C}_t$), open-set \cite{saito2018open} ($\mathcal{C}_s \subset \mathcal{C}_t$), and open-partial-set \cite{you2019universal} ($\mathcal{C}_s \cap \mathcal{C}_t \not= \O, \mathcal{C}_s\not \supseteq \mathcal{C}_t, \mathcal{C}_s\not \subseteq \mathcal{C}_t$).
	Despite showing favorable results on one single setting with specific category overlap, previous UDA methods can not guarantee generalization to other settings.
	For example, a open-partial-set method \cite{you2019universal} even under-performs the baseline source-only method for partial-set UDA \cite{saito2020universal}.
	A pioneering work based on self-supervision is proposed in \cite{saito2020universal} to handle arbitrary settings across domains, however, it cannot well handle unknown samples in the target domain.
	Besides, with the increasing attention on data privacy, recent studies like \cite{liang2020we,li2020model,kundu2020universal} attempt to protect personal private data by only offering models trained in the source domain.
	For instance, the car company (\emph{vendor}) in self-driving will only deploy the system but not share the training data.
	Assuming the presence of unknown samples in the target domain, this paper focuses on a new but realistic UDA problem illustrated in Fig.~\ref{fig:framework} where the trained model instead of the source data is provided and the detailed category shift (\ie, open-set or open-partial-set) is unknown meanwhile.
	
	Generally, there are three challenges in such a source data-free universal adaptation problem, \ie,
	i) how to train a source model and adapt it to the unlabeled target domain; 
	ii) how to guarantee the adaptation performance when some source classes may be absent in the target domain;
	and iii) how to guarantee the adaptation performance when some unknown classes exist in the target domain.
	To tackle them together, we propose a unified framework called Universal Model ADaptation (UMAD).
	Inspired by prior model adaptation methods~\cite{liang2020we,liang2021source} which are mainly tailored to closed-set UDA, we also first train a self-defined source model and then provide it to the target domain for separate adaptation.
	Differently, we design a two-head classifier within the source model and introduce an informative consistency score to measure the uncertainty of target samples. 
	With this score, we then propose a novel rejection mechanism to distinguish unknown samples from known samples in the target domain, where the key threshold is automatically determined by the averaged score of interpolated target samples via Mixup \cite{zhang2018mixup}. 
	As for the bilateral model adaptation, we first reject unknown samples by pushing them far away from the source classification boundary. 
	To align known samples with the corresponding source classifier, we then propose a new localized mutual information objective with a generalized diversity term.
	
	\noindent \textbf{Contributions.} We study the problem of universal model adaptation, with unknown samples in the target domain and no access to the source data. We make the following contributions:
	1). we propose a simple yet effective model adaptation framework that aligns known samples against domain shift and rejects unknown samples under unknown category shift;
	2). we devise a new information consistency score and an automatic thresholding scheme to reject unknown samples.
	3). we propose a localized mutual information objective to implicitly align known samples with the source model under various category shifts.
	4). we validate the effectiveness of each component in UMAD via extensive experiments.
	Our source data-free UMAD even exhibits comparable performance to state-of-the-art data-dependent methods on open-set and open-partial-set UDA tasks.
	
	\section{Related Work}
	\textbf{Unsupervised domain adaptation.}
	Before the deep learning era, researchers utilize hand-crafted features and always resort to instance weighting \cite{zadrozny2004learning,sugiyama2007direct,huang2006correcting} or feature alignment \cite{gong2012geodesic,sun2016return,long2013transfer} to address covariate shift.
	In recent years, benefiting from representation learning, deep domain adaptation methods \cite{tzeng2014deep,ganin2015unsupervised,long2015learning,tzeng2017adversarial} have almost dominated this field.
	To mitigate the gap across domains, there are two prevailing feature-level paradigms, \ie, adversarial training \cite{ganin2015unsupervised} and discrepancy minimization \cite{long2015learning}.
	By contrast, other studies \cite{saito2018maximum,jin2020minimum,liang2020we} focus on the network outputs and expect the target data can be structurally close to the classification boundary.
	Some other works investigate the network components like batch normalization \cite{li2018adaptive} and dropout \cite{lee2019drop} or study the properties of learned features \cite{xu2019larger,chen2019transferability}.
	
	Most aforementioned UDA methods focus on the closed-set scenario where the source label set is the same as the target label set. 
	To be more practical, the open-set UDA scenario is first introduced in \cite{panareda2017open} where unknown samples exist in both domains.
	The following work \cite{saito2018open} further assumes the presence of unknown samples only in the target domain, and it allows extracting features that separate the unknown target from known target samples via adversarial training.
	Mixing the properties of open-set and partial-set \cite{cao2018partial} together, \cite{you2019universal} introduces a realistic scenario termed ``universal" UDA.
	It exploits both the domain similarity and the prediction uncertainty of each sample to develop a weighting mechanism for discovering label sets shared by both domains and thus promote the common-class adaptation. 
	In a narrow sense, \cite{you2019universal} only deals well with open-partial-set UDA but worse with other special cases like open-set and partial-set UDA scenarios.
	A recent work \cite{saito2020universal} proposes a truly universal UDA method that works well on all four scenarios, yet it performs badly for open-set and open-partial-set UDA.
	Generally, different from the well-studied closed-set scenario, these new scenarios are more challenging due to the asymmetric label set across domains, which are prone to cause the negative transfer.
	We focus on developing a unified method for adaptation to a domain with unknown samples, \ie, open-set and open-partial-set UDA.
	
	\begin{figure*}[!htb]
		\centering
		\small
		\setlength\tabcolsep{1mm}
		\renewcommand\arraystretch{0.35}
		\begin{tabular}{cc}
			\includegraphics[width=0.42\linewidth,height=1.7in,trim={0cm 0cm 0.0cm 0.0cm}, clip]{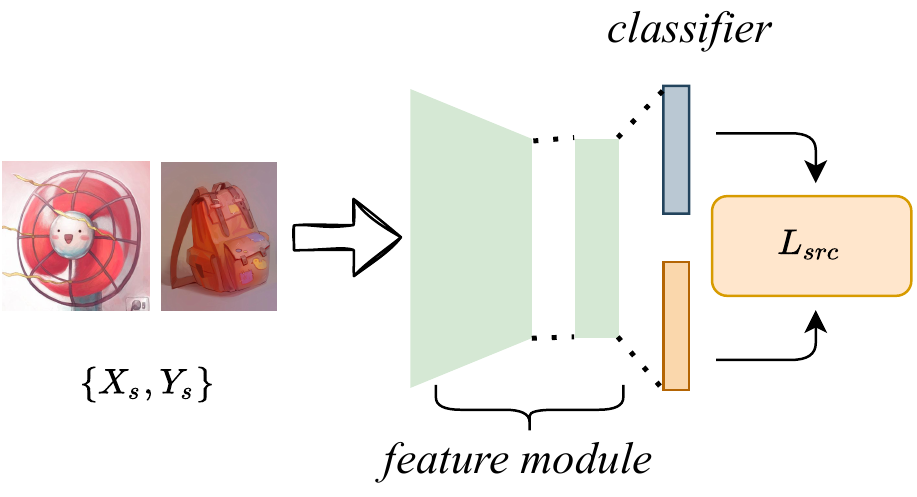} &
			\quad
			\includegraphics[width=0.58\linewidth,height=1.7in,trim={0cm 0cm 0.0cm 0.0cm}, clip]{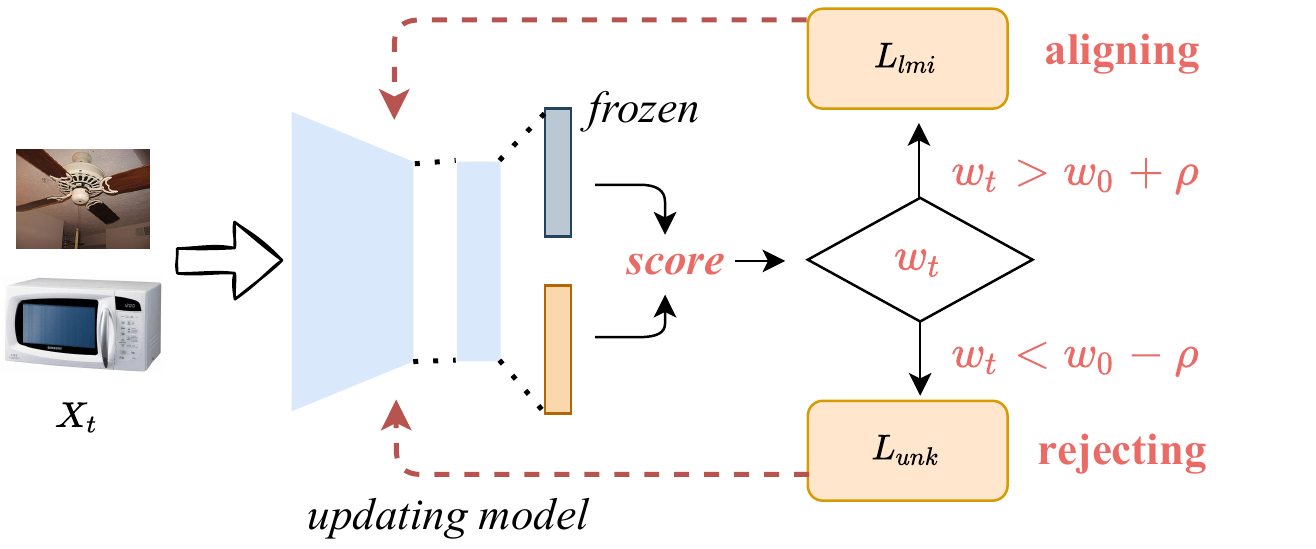} \\
			~\\
			(a) \textbf{model training} in the source domain  & (b) \textbf{model adaptation} in the target domain
		\end{tabular}
		\caption{Overview of Universal Model ADaptation (UMAD). UMAD trains a deep neural network with a two-head classifier and provides the trained model to the target domain. During adaptation, UMAD freezes the two-head classifier and fine-tunes the source feature module from two different directions, \ie, aligning known samples against domain shift and rejecting unknown samples against category shift.}
		\label{fig:UMAD}
	\end{figure*} 
	
	\textbf{Source data-free domain adaptation.}
	Besides the attention on the varying label set in the target domain, researchers also consider domain adaptation in the absence of source data due to data privacy concerns and copyright laws.
	A pioneering work \cite{chidlovskii2016domain} comes up with several solutions using the source classifier, and \cite{liang2019distant} requires the statistics information like means and co-variances during feature alignment. 
	However, they do not exploit representation learning but use fixed features instead.
	Inspired by the idea of model transfer \cite{mansour2008domain,kuzborskij2013stability,nelakurthi2018source}, \cite{liang2020we} proposes a new framework where only the source feature module is fine-tuned in the target domain to achieve domain alignment.
	Meanwhile, \cite{li2020model} proposes clustering regularization and weight regularization that encourages similarity to the source model, and \cite{kundu2020universal,kundu2020towards} mainly focus on building a stronger pre-trained source model by generating unknown samples.
	Following works address this problem via variational inference \cite{yeh2021sofa}, generative adversarial network \cite{kurmi2021domain}, and mutual information \cite{lao2021hypothesis}, respectively.
	In addition, \cite{peng2020federated} exploits federated learning and transfers knowledge from the decentralized nodes to a new node with a different data domain.
	However, as evidenced in \cite{zhu2019deep}, sharing the gradient information is still risky.
	Besides, these methods above are designed for only one specific UDA scenario, \eg, \cite{li2020model} for closed-set UDA, \cite{kundu2020towards} for open-set UDA, and \cite{kundu2020universal} for open-partial-set UDA.
	Even \cite{liang2020we} provides results for UDA under three different scenarios, it still needs to know the UDA scenario in advance and subsequently adjusts the algorithm.
	By contrast, our method is a unified framework for both open-set and open-partial-set UDA with the same parameters. 
	
	\textbf{Open-set learning.}
	Open-set learning tries to handle ``unknown" classes that are not contained in the training set, which is vital when deployed to real applications.
	\cite{bendale2016towards} proposes an approach that builds per-class probabilistic models of the input not belonging to the known classes and combines these in the final estimate of each class probability including the unknown class.
	\cite{hendrycks2016baseline} shows that the maximum of the softmax outputs, or confidence, can be used to detect out-of-distribution (OOD) inputs.
	\cite{liang2018enhancing} exploits temperature scaling and perturbs an input in the direction of maximally increasing the max-softmax.
	Besides the aleatoric uncertainty, \cite{gal2016dropout} employs Bayesian neural networks to consider epistemic uncertainty, which is computationally more efficient than an ensemble of networks \cite{lakshminarayanan2017simple}.
	\cite{lee2018simple} models the distribution of intermediate layer’s activation by a Gaussian distribution for each class,
	and the OOD score is given by Mahalanobis distance and logistic regression, assuming the availability of OOD samples.
	Inspired by \cite{saito2018maximum}, the discrepancy between two classifiers is utilized to reject unknown samples in \cite{yu2019unsupervised}.
	Our method differs from \cite{yu2019unsupervised} in that we propose a new information consistency score and an automatic thresholding scheme, which works under both domain shift and category shift.
	
	\section{Methodology}
	In a source data-free UDA task, there exist a source domain $\mathcal{D}_s=\{(x_i^s,y_i^s)\}_{i=1}^{N_s}$ consisting of $N_s$ labeled samples and a target domain $\mathcal{D}_t=\{(x_i^t)\}_{i=1}^{N_t}$ consisting of $N_t$ unlabeled samples, and $\mathcal{D}_s$ is not accessible during adaptation to the target domain. 
	$\mathcal{C}_s \ni y_i^s$ and $\mathcal{C}_t \ni y_i^t$ denote the label sets of the source domain and the target domain, respectively. 
	This paper mainly focuses on $\bar{\mathcal{C}}_t=\mathcal{C}_t \setminus \mathcal{C}_s \not= \O$, which contains two open-set scenarios as special cases in Fig.~\ref{fig:framework}.
	For a better illustration, $\bar{\mathcal{C}}_t$ denotes the target-only classes, and $\mathcal{C}=\mathcal{C}_s \cap \mathcal{C}_t$ denotes the classes existing in both domains.
	The ultimate goal is to recognize target samples belonging to $\mathcal{C}$ and reject target samples belonging to $\bar{\mathcal{C}_t}$ as ``unknown" (\ie, recognized as the $(K+1)$-th class, where $K=|\mathcal{C}_s|$ denotes the number of classes in the source domain).
	
	\subsection{Model training in the source domain}
	Although \cite{peng2020federated} offers a data-inaccessible solution via federated learning, the manner of gradient aggregation may be risky \cite{zhu2019deep} and it still needs the presence of source data during adaptation. 
	Instead, a trained source model could be provided to different target domains and would not leak the source data to our best knowledge.
	Then we mainly follow previous methods \cite{liang2020we,li2020model} and adopt the model transfer strategy to address the data privacy issues.
	As shown in Fig.~\ref{fig:UMAD} (a), we adopt a similar network architecture as \cite{liang2020we}, and the only difference lies in the classifier layer. 
	Denote by $g_s: \mathcal{X}_s \to \mathcal{R}^{d}$ the feature module within the network, and $h_v: \mathcal{R}^{d} \to \mathcal{R}^{K}, v=\{1,2\}$ represent two different classifiers taking features from $g_s$, respectively. 
	For simplicity, we denote $p^v(x_s)=h_v(g_s(x_s))$ as the $K$-dimensional soft-max class probability from the $v$-th classifier for an input $x_s$.
	The source model with two heads is learned by minimizing the following regularized co-training loss,
	\begin{equation}
		\begin{aligned}
			\mathcal{L}_{src} = -\mathbb{E}_{v, (x_s,y_s)\in \mathcal{D}_s} \sum\nolimits_{k} q_k \log p_{k}^{v}(x_s) + \lambda \|W_1^TW_2\|,
		\end{aligned}
	\end{equation}  
	where $p_{k}^{v}(x_s)$ denotes the $k$-th element of the soft-max output $p^{v}(x_s)$, and $q_k=(1-\alpha)\mathbbm{1}[y_s=k] + \alpha/K$ is the smoothed one-of-$K$ encoding of $y_s$, and $\alpha$ is the smoothing parameter empirically set to 0.1.
	Besides, $\lambda$ is determined based on validation set, and the latter orthogonal constraint \cite{chen2011co,saito2017asymmetric} ($W_v$ denotes the fc weight in $h_v, v=1,2$) is employed to expect each head to learn different features.
	
	\subsection{Model adaptation in the target domain}
	Before we transfer the trained source model $f_s$ in the target domain $\mathcal{D}_t$, we need to answer the following questions, (i) how to distinguish target samples belonging to $\mathcal{C}$ from those belonging to $\bar{\mathcal{C}}_t$, and (ii) how to mitigate the domain shift between $\mathcal{D}_s$ and $\mathcal{D}_t$ using samples from $\mathcal{C}$.
	For the first question, there are three common solutions to measure the uncertainty of target samples, \ie, confidence, entropy, and consistency.
	Different from the mixture of three criteria in \cite{fu2020learning}, we develop a new informative consistency score based on the two-head classifier, combining the advantages of both consistency and information entropy. 
	It simply measures the inner product distance between the soft-max probabilities from $h_1, h_2$,
	\begin{equation}
		\begin{aligned}
			\textbf{iscore}(x_t; g_t, h_1, h_2) = \textbf{w}_t = <p^{1}(x_t), \;p^{2}(x_t)>,
		\end{aligned}
		\label{eq:score}
	\end{equation} 
	where $<,>$ denotes the inner product.
	First, $\textbf{w}_t$ always lies in the range of $[0,1]$ and gets rid of the non-trivial normalization. 
	Second, when $p^{1}=p^{2}$, $\textbf{w}_t$ owns the same property as the collision entropy, which can measure the informative uncertainty.
	Third, the discrepancy between different classifiers has been proven effective for OOD detection \cite{yu2019unsupervised}.
	Hereafter, we term $\textbf{w}_t$ in Eq.~(\ref{eq:score}) as ``information consistency score".
	The larger $\textbf{w}_t$ is, the more likely $x_t$ belongs to $\mathcal{C}$ (known classes). 
	We can confirm unknown samples have smaller values than known samples in Fig.~\ref{fig:score}.
	We  prove in Fig.~\ref{fig:score} that unknown samples generally have smaller \textbf{iscore} values than known ones.
	
	\begin{figure}[!t]
		\begin{center}
			\includegraphics[width=0.44\textwidth, trim=0 0 0 0,clip]{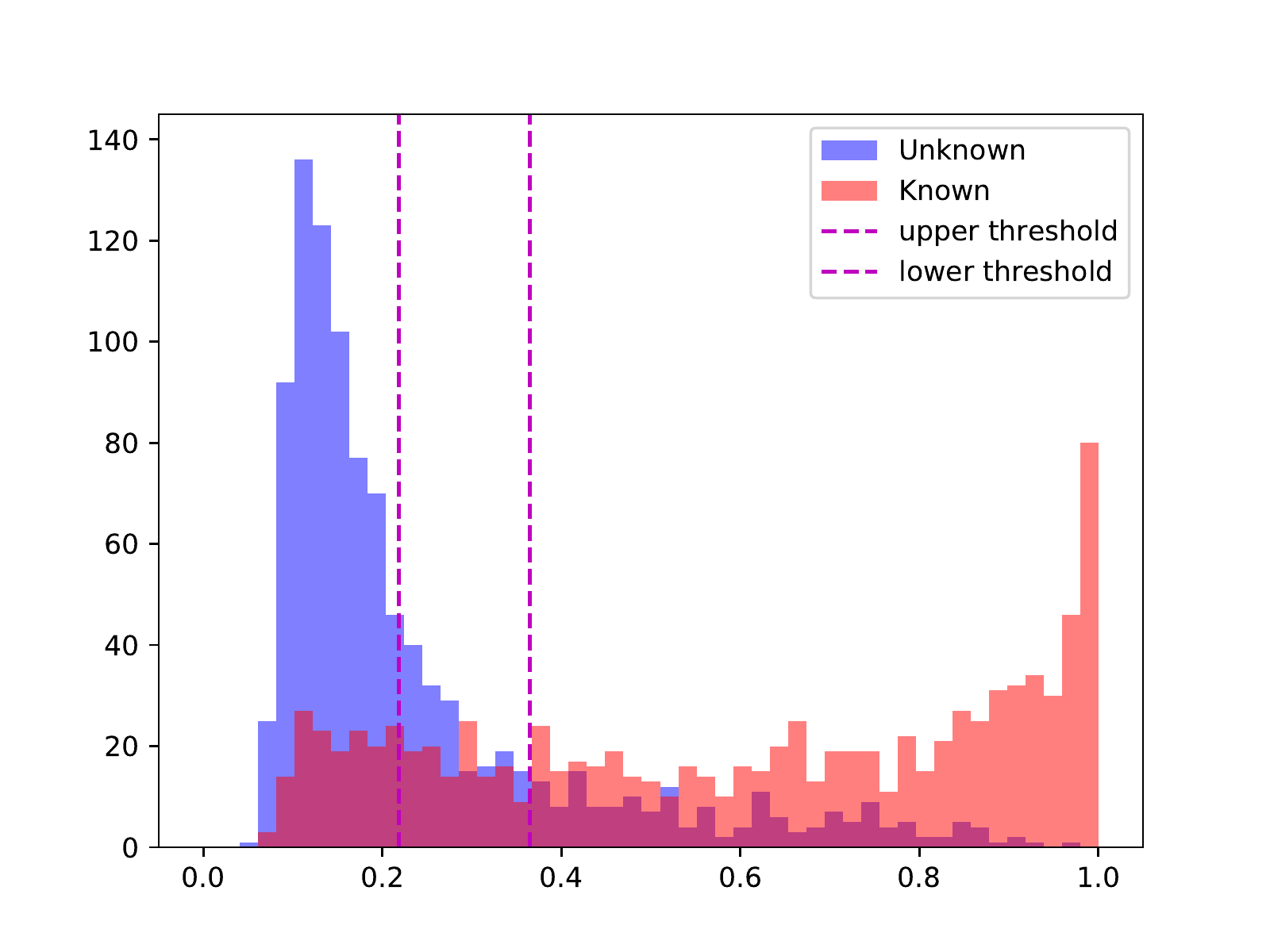}
			\put(-165,115){\textbf{$w_{0}-\rho$}}
			\put(-127,85){\textbf{$w_{0}+\rho$}}
			\caption{Histogram of the informative consistency score of all the target samples (\textbf{\color{red}known} \& \textbf{\color{blue}unknown}) through the source model.}
			\label{fig:score}
		\end{center}
	\end{figure}
	
	As can be shown from Fig.~\ref{fig:score}, despite the domain shift, samples with $\mathcal{C}$ (known) are expected to have large scores and samples from $\bar{\mathcal{C}}_t$ (unknown) are expected to have small scores.
	Therefore, we aim to utilize a threshold $w_0$ to distinguish known samples and unknown samples in the target domain.
	To avoid the sensitivity to $w_0$, we further introduce a slack margin $\rho$ and only consider the target samples beyond two thresholds $w_0-\rho$ and $w_0+\rho$ during adaptation.
	Since we are provided with the source model instead of the raw source data, it is hard to exploit prevailing paradigms like domain adversarial training and discrepancy minimization under the source data-free problem setting.
	Inspired by \cite{liang2020we}, we also freeze the classifier within the source model and try to merely learn the target-specific feature module by fine-tuning the source feature module.
	
	\begin{lemma}
		For a linear multi-class classifier $f_m$, $p$ is the soft-max output for the feature $z$. 
		Given a well-trained data set $Z_s=\{z|\ \|f_m(z)\|_2 \to 1\}$ and the distance from a data point to a set $D(z_t, Z_s)=\min_{z\in Z_s} d_{cosine}(z_t, z)$, then we have $\lim_{\|f_m(z_t)\|_2 \to 1} D(z_t, Z_s) \to 0$.
		\label{lemma}
	\end{lemma}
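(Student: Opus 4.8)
The plan is to convert both occurrences of the hypothesis ``$\|f_m(\cdot)\|_2 \to 1$'' into a statement about feature \emph{directions} and then exploit that $d_{cosine}$ ignores magnitude. Reading $f_m(z)=p$ as the soft-max vector on the probability simplex, I would first record that $\|p\|_2\to 1$ is equivalent to $\max_k p_k\to 1$: since $p_k\in[0,1]$ and $\sum_k p_k=1$, one has $(\max_k p_k)^2 \le \sum_k p_k^2 = \|p\|_2^2 \le \max_k p_k$ (the last step from $\sum_k p_k^2 \le (\max_j p_j)\sum_k p_k$), so $\|p\|_2\to 1$ holds exactly when the prediction concentrates on a single class $k^\star$. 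This applies both to the members of $Z_s$ and to the query $z_t$.

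Next I would invoke linearity. With logits $\ell_k(z)=w_k^{T}z+b_k$, the requirement $p_{k^\star}(z_t)\to 1$ forces each margin $(w_{k^\star}-w_j)^{T}z_t+(b_{k^\star}-b_j)\to+\infty$ for $j\neq k^\star$. Writing $z_t=r_t\hat z_t$ with $\hat z_t$ a unit vector, a bounded $r_t$ would keep the margins bounded, so necessarily $r_t\to\infty$ while $\hat z_t$ sits in the \emph{interior} of the decision cone $C_{k^\star}=\{\hat z:(w_{k^\star}-w_j)^{T}\hat z>0,\ \forall j\neq k^\star\}$. A direction on $\partial C_{k^\star}$ would keep two top logits within a bounded gap once the norm dominates the biases, so $\max_k p_k$ would stay below $1$, contradicting the hypothesis.

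The key step is that the confident region $Z_s$ is closed under positive scaling along such a direction. Along the ray $\{s\hat z_t:s>0\}$ the same margins grow linearly in $s$, so for $s$ large enough $\|f_m(s\hat z_t)\|_2\to 1$ and hence $s\hat z_t\in Z_s$. This witness shares the direction of $z_t$, giving $d_{cosine}(z_t,s\hat z_t)=1-\cos 0=0$, and therefore $D(z_t,Z_s)=\min_{z\in Z_s}d_{cosine}(z_t,z)\to 0$, as claimed.

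I expect the main obstacle to be exactly this reconciliation between norm and direction: confidence is governed by the magnitude of the logit margins (hence by $\|z_t\|$), whereas $d_{cosine}$ is scale-invariant, so the proof must make explicit that $Z_s$ is a union of scale-invariant cones admitting a same-direction witness. A secondary care point is the boundary regime, where $\hat z_t$ may drift toward $\partial C_{k^\star}$ while $r_t$ compensates to keep the confidence high; I would dispatch it by noting that each $\hat z_t$ in a confident sequence is still strictly interior, so a same-direction witness — and thus $d_{cosine}=0$ term by term — exists, making the limit immediate. The bias terms are handled by localizing to the large-norm regime where they are dominated by the growing linear part.
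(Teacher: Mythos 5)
There is a genuine gap, and it lies in how you read $Z_s$. You treat $Z_s$ as the \emph{entire confident region} of feature space, closed under positive scaling, so that the ray $\{s\hat z_t : s>0\}$ eventually supplies a witness inside $Z_s$ at cosine distance zero from $z_t$. Under that reading your argument is internally consistent, but the lemma it proves is vacuous: $z_t$ itself (or any large rescaling of it) already belongs to $Z_s$, so $D(z_t,Z_s)=0$ carries no information. The paper intends $Z_s$ to be the \emph{source data set} --- a fixed collection of actual features produced by the well-trained model (hence ``well-trained data set''), and its proof explicitly invokes an assumption you never use: that the source data set contains data from each class. Under this intended reading your key step fails: the scaled point $s\hat z_t$ is a synthetic point, not a source datum, so it is not admissible in the minimum defining $D(z_t,Z_s)$, and nothing in your argument connects $z_t$ to any actual element of $Z_s$.

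The paper's route supplies exactly this missing link by anchoring both sides on the classifier weights: a confident point is claimed to align in cosine distance with the weight vector $w_j$ of its predicted class; since the source set covers every class, it contains a point of class $j$ also aligned with $w_j$; a triangle-inequality step for $d_{cosine}$ then bounds $D(z_t,Z_s)$ by two vanishing terms. Your witness should have been such a source point of the predicted class, not a rescaled copy of $z_t$. Ironically, your cone analysis is sharper than the paper's on one point: confidence only forces the direction $\hat z_t$ into the interior of the dominance cone of $w_{k^\star}$, not onto the ray of $w_{k^\star}$. For instance, with $w_1=(1,0)$, $w_2=(-1,0)$, the points $(t,ct)$ become arbitrarily confident as $t\to\infty$ yet keep cosine distance $1-1/\sqrt{1+c^2}$ from $w_1$; so the paper's implication $\|f_m(z)\|_2\to 1 \Rightarrow d_{cosine}(z,w_j)\to 0$ is itself not valid without further assumptions on how the features were trained. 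But noticing this flaw does not repair your proof: to obtain the lemma's intended content (confident target features land near \emph{source} features) you still need a shared anchor between $z_t$ and the source data --- class weight vectors, class prototypes, or an explicit coverage assumption on $Z_s$ --- which your scale-invariance argument does not provide.
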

	\begin{proof}
		For the sake of simplicity, we consider a linear classifier without the bias term, \emph{i.e.}, $f_m(z)=\delta(W^Tz)$, where $W \in \mathcal{R}^{d\times K}$, $\delta(\cdot)$ denotes the soft-max function, and $d$ denotes the length of a feature vector $z$, and $K$ denotes the size of classes.
		The non-negative cosine distance is defined as $d_{cosine}=1-\frac{<x,y>}{\|x\|\cdot\|y\|}$.
		Denote $W=[w_1, w_2, \cdots, w_K]$, then we can obtain the following term,
		\begin{equation}
			\begin{aligned}
				\|f_m(z)\|_2 & = \|\delta(W^Tz)\|_2 = \sum\nolimits_j \left(\frac{\exp(w_j^Tz)}{\sum_k \exp(w_k^Tz)}\right)^2,\\
				\|f_m(z)\|_2 \to 1 & \Rightarrow \exists j\in [1,K],\; w_j^Tz \gg w_k^Tz, \;k\not = j, \\
				& \Rightarrow \exists j\in [1,K],\; d_{cosine}(z,w_j) \to 0.
			\end{aligned}
		\end{equation}
		Then the distance from $z_t$ to $Z_s$ can be computed as 
		\begin{equation}
			\begin{aligned}
				D(z_t, Z_s)&=\min_{z\in Z_s} d_{cosine}(z_t, z)\\
				&\leq \min_{z\in Z_s} d_{cosine}(z_t, w_j) + d_{cosine}(z, w_j),\\
				&\leq d_{cosine}(z_t, w_j) + \min_{z\in Z_s} d_{cosine}(z, w_j),\\
				\text{where} \; j &= \arg\min_k d_{cosine}(z_t, w_k).
			\end{aligned}
		\end{equation}
		Since the source data set contains data from each class, it is easy to find the later term $\min_{z\in Z_s} d_{cosine}(z, w_j)\to 0$.
		In this manner, the conclusion $D(z_t, Z_s)\to 0$ is arrived.
	\end{proof}
	
	According to Lemma \ref{lemma}, for ``known" samples with consistency scores higher than $w_0+\rho$, we devise a novel loss objective called localized mutual information $\mathcal{L}_{lmi}$ in the following, to encourage target features with diversified one-hot network outputs,
	\begin{equation}
	\begin{aligned}
		\mathcal{L}_{lmi} &= \mathbb{E}_{v, x_t\in \mathcal{X}_t^{+}} p^v(x_t)\log p^v(x_t) - D_{kl}(Q_t^v\: || \: \mathbf{\hat{Q}_t^v}), \\
		\mathcal{X}_t^{+} & = \{x_t | x_t \in X_t, \textbf{w}_t>w_0+\rho\}, \: Q_t^v =\mathbb{E}_{x_t\in \mathcal{X}_t^{+}} p^v(x_t), \\
	\end{aligned}
	\label{eq:known}
	\end{equation}  
	where $D_{kl}$ is the Kullback-Leibler divergence function, $Q_t^v$ denotes the batch-level mean probability of the $v$-th classifier, and $\hat{Q}_t^v$ is a $K$-dimensional uniform vector in \cite{liang2020we,liang2021source}.
	It is easy to find $\mathcal{L}_{lmi}$ considers the diversity and circumvents the trivial solution where all outputs collapse to a particular class. 
	However, for partial-set or open-partial-set UDA, it is not reasonable since it would force target samples to be wrongly recognized as classes that only exist in the source domain.
	To address this problem in class-imbalanced domain adaptation, \cite{li2020rethinking} proposes to use a moving average $\hat{Q}_t^v$ of $p(x_t)$. 
	However, for closed-set UDA, it relies on a large batch size during training to estimate an accurate global label distribution. 
	Besides, within a mini-batch, assuming the local label distribution is close to the global one is also not desirable.
	Instead, we propose a new operation called \textbf{flattening} as,
	\begin{equation}
		\begin{aligned}
			\text{Flatten}(p, T)_i = p_i^{T}/ \sum\nolimits_i p_i^{T}, \;0<T<1,
		\end{aligned}
		\label{eq:flatten}
	\end{equation}
	which is utilized to estimate the local label distribution.
	To avoid the influence of limited batch size and imbalanced label distribution, the batch-level mean probability $Q_t^v$ is required to be close to its flattened one in the following,
	\begin{equation}
		\begin{aligned}
			\mathbf{\hat{Q}_t^v} = \text{Flatten}(Q_t^v,T).
		\end{aligned}
	\end{equation}
	Note that $\mathcal{L}_{lmi}$ becomes the mutual information term when $T=0$, and the simplified entropy minimization term for $T=1$.
	In this paper, we empirically set this parameter to 0.1 for all open-set and open-partial-set UDA tasks.
	
	To tackle with ``unknown" samples with consistency scores smaller than $w_0-\rho$, we do not introduce an additional $(K+1)$-th class but employ the following entropic open-set loss \cite{dhamija2018reducing} to push the target samples far away from the source classification boundary,
	\begin{equation}
		\begin{aligned}
			\mathcal{L}_{unk} = -\mathbb{E}_{v,x_t\in \mathcal{X}_t^{-}} \sum\nolimits_{k} \frac{1}{K}\:\log p_{k}^{v}(x_t), 
		\end{aligned}
		\label{eq:unknown}
	\end{equation}
	where $\mathcal{X}_t^{-} = \{x_t | x_t \in X_t, \textbf{w}_t<w_0-\rho\}$, and $K$ denotes the size of source label set $\mathcal{C}_s$.
	The minimum of the loss $\mathcal{L}_{unk}$ for a sample $x_t$ is achieved when each soft-max scores $p^v(x_t)$ equals to the uniform vector.
	In that case, the consistency score becomes $w_t=1/K$ which is quite small.
	
	Combining these two different objectives in Eq.~(\ref{eq:known}) and Eq.~(\ref{eq:unknown}) for known and unknown samples together, the final objective of UMAD during target adaptation is
	\begin{equation}
		\begin{aligned}
			\mathcal{L}_{tgt} = \mathcal{L}_{unk} - \mathcal{L}_{lmi}.
		\end{aligned}
		\label{eq:final}
	\end{equation}
	
	\subsection{How to decide the threshold $w_0$?} 
	\label{sec:par}
	As the threshold parameter, $w_0$ is crucial for open-set learning including our UMAD approach, we further provide an intuitive strategy on how to select it.
	Assuming that samples in known classes $\mathcal{C}$ have confident predictions and samples in unknown classes $\bar{\mathcal{C}}_t$ have confused predictions, we can use Eq.~(\ref{eq:score}) to calculate the mean informative consistency score of unknown samples for $w_0$.
	But the target domain is completely unlabeled, it is impracticable to obtain such a score.
	Instead, we synthesize such ``negative" samples via \emph{mixup} \cite{zhang2018mixup}.
	Then the mean score of ``negative" samples is obtained by
	\begin{equation}
		\begin{aligned}
			w_u = \mathbb{E}_{x_i, x_j\in \mathcal{X}_t} \: \textbf{iscore}(0.5x_i + 0.5x_j ).
		\end{aligned}
		\label{eq:sys}
	\end{equation}
	Intuitively, for a pair of known samples with high confidences, the confidence after mixing will be halved.
	By contrast, for a pair of unknown samples, the confidence after mixing is believed to be similar.
	In other words, $w_u$ is much smaller than $\mathbb{E}_{x\in \mathcal{X}_t} \: \textbf{iscore}(x)$ if the number of samples in $\bar{\mathcal{C}}_t$ is small, and $w_u$ is relatively smaller with the number of samples in $\bar{\mathcal{C}}_t$ increases, indicating $w_u$ could be a suitable choice for $w_0$. 
	It corresponds with the assumption that ``\emph{mixing a known sample with another known sample gets an unknown sample while mixing an unknown sample with a known sample gets another unknown sample}". 
	Besides, we empirically set the slack margin $\rho=0.1*w_0$, its sensitivity analysis could be found in the experiment.
	
	\setlength{\tabcolsep}{3.0pt}
	\begin{table}[htb]
		\centering
		\caption{Details of class splits in each setting. $|\mathcal{C}_{s} \cap \mathcal{C}_{t}|$: \# shared classes across domains, $|\mathcal{C}_s - \mathcal{C}_{t}|$: \# private classes in the source domain, and $|\mathcal{C}_t - \mathcal{C}_{s}|$: \# private classes in the target domain.}
		\resizebox{0.45\textwidth}{!}{$
			\begin{tabular}{ccccc}
				\toprule
				Tasks & Datasets & $|\mathcal{C}_{s} \cap \mathcal{C}_{t}|$ & $|\mathcal{C}_s - \mathcal{C}_{t}|$ & $|\mathcal{C}_t - \mathcal{C}_{s}|$ \\
				\midrule
				\multirow{3}{*}{OSDA~\cite{liu2019separate}} & Office~\cite{saenko2010adapting} & 10 & 0 & 11 \\
				& Office-Home~\cite{venkateswara2017deep} & 25 & 0 & 40 \\
				& VisDA-C~\cite{peng2017visda} & 6 & 0 & 6 \\
				\midrule
				\multirow{4}{*}{OPDA~\cite{you2019universal}} & Office~\cite{saenko2010adapting} & 10 & 10 & 11\\
				& Office-Home~\cite{venkateswara2017deep} & 10 & 5 & 50\\
				& VisDA-C~\cite{peng2017visda} & 6  & 3  & 3 \\
				& DomainNet~\cite{peng2019moment} & 150 & 50 & 145 \\  
				\bottomrule
			\end{tabular}
			$}
		\label{tab:classsplit}
	\end{table}

	\setlength{\tabcolsep}{3.0pt}
	\begin{table*}[!htb]
		\centering
		\caption{HOS (\%) averaged over three runs of each method on \textbf{Office} and \textbf{VisDA-C} for OSDA. (Best in \textbf{\color{blush}red} and second best in \textbf{\color{bblue}blue}.)} 
		\label{tab:office-oda}
		\resizebox{0.75\textwidth}{!}{$
			\begin{tabular}{lccc|cccccca|a}
				\toprule
				Method & SDF & OSDA & OPDA & A $\to$ D & A $\to$ W & D $\to$ A & D $\to$ W & W $\to$ A & W $\to$ D & Avg. & VisDA-C \\
				\midrule
				OSBP \cite{saito2018open} & $\times$ & $\checkmark$ & $\times$ & 78.0 & 75.9 & 67.0 & 76.3 & 69.0 & 78.9 & 74.2 & 46.9 \\
				UAN \cite{you2019universal} & $\times$ & $\times$ & $\checkmark$ & 54.2 & 57.4 & 73.7 & 75.2 & 59.8 & 67.6 & 64.6 & 50.8 \\
				CMU \cite{fu2020learning}  & $\times$ & $\times$ & $\checkmark$ & 71.6 & 70.5 & 80.2 & 81.2 & 70.8 & 70.8 & 74.2 & 24.1 \\
				ROS \cite{bucci2020effectiveness} & $\times$ & $\checkmark$ & $\times$ & 65.8 & 71.7 & \textbf{\color{bblue}87.2} & 94.8 & 82.0 & \textbf{\color{bblue}98.2} & 83.3 & 50.1 \\
				DANCE \cite{saito2021universal} & $\times$ & $\checkmark$ & $\checkmark$ & 82.0 & 74.7 & 68.0 & 82.1 & 52.2 & 82.5 & 73.6 & 59.7 \\
				DCC$^\dagger$ \cite{li2021domain} & $\times$ & $\checkmark$ & $\checkmark$ & 58.3 & 54.8 & 67.2 & 89.4 & 85.3 & 80.9 & 72.6 & 70.7 \\
				OVANet \cite{saito2021ovanet} & $\times$ & $\checkmark$ & $\checkmark$ & \textbf{\color{bblue}89.4} & \textbf{\color{bblue}87.6} & 85.2 & \textbf{\color{blush}97.2} & 87.8 & 97.9 & \textbf{\color{bblue}90.8} & 56.1 \\
				SHOT \cite{liang2020we} & $\checkmark$ & $\checkmark$ & $\times$ & 80.2 & 71.6 & 64.3 & 93.1 & 64.0 & 91.8 & 77.5 & 28.1 \\
				Inheritune$^\dagger$ \cite{kundu2020towards} & $\checkmark$ & $\checkmark$ & $\times$ & 78.0 & 81.4 & 83.1 & 92.2 & \textbf{\color{blush}91.3} & \textbf{\color{blush}99.7} & 87.6 & 74.8 \\
				OSHT-SC$^\dagger$ \cite{feng2021open} & $\checkmark$ & $\checkmark$ & $\times$ & \textbf{\color{blush}91.3} & \textbf{\color{blush}92.4} & \textbf{\color{blush}90.8} & \textbf{\color{bblue}95.2} & \textbf{\color{bblue}89.6} & 96.0 & \textbf{\color{blush}92.5} & \textbf{\color{bblue}78.6} \\
				UMAD & $\checkmark$ & $\checkmark$ & $\checkmark$ & 88.5 & 84.4 & 86.8 & 95.0 & 88.2 & 95.9 & 89.8 & \textbf{\color{blush}80.2} \\
				\hline
			\end{tabular}
			$}
	\end{table*}

	\section{Experiments}	
	\subsection{Setup}
	\textbf{Datasets.} 
	\textbf{Office} \cite{saenko2010adapting} is a widely-used DA benchmark that consists of three subsets, \ie, Amazon (A), Dslr (D), and Webcam (W). Each subset contains 31 object classes under the office environment. 
	\textbf{Office-Home} \cite{venkateswara2017deep} is another popular benchmark that consists of four subsets, \ie, Artistic images (Ar), Clip Art (Cl), Product images (Pr), and Real-world images (Re). Each subset contains 65 object classes.
	\textbf{VisDA-C} \cite{peng2017visda} is a large-scale testbed, consisting of 12 object classes in two subsets, \ie, real images and synthesis data. Typically, only the synthesis-to-real transfer task is investigated.
	\textbf{DomainNet} \cite{peng2019moment} is the largest domain adaptation dataset with about 0.6 million images, and we follow \cite{fu2020learning} to use three subsets, \ie, Painting (P), Real (R), and Sketch (S).
	Details of classes split is listed in Table~\ref{tab:classsplit}.	
		
	\begin{table*}[!htb]
		\caption{HOS (\%) averaged over three runs of each method on \textbf{Office-Home} for OSDA.} 
		\label{tab:home-oda}
		\resizebox{0.99\textwidth}{!}{$
			\begin{tabular}{lccc|cccccccccccca}
				\toprule
				Method & SDF & OSDA & OPDA & Ar$\to$Cl & Ar$\to$Pr & Ar$\to$Re & Cl$\to$Ar & Cl$\to$Pr & Cl$\to$Re & Pr$\to$Ar & Pr$\to$Cl & Pr$\to$Re & Re$\to$Ar & Re$\to$Cl & Re$\to$Pr &Avg. \\
				\midrule
				OSBP \cite{saito2018open} & $\times$ & $\checkmark$ & $\times$ & 53.4 & 62.6 & 63.0 & 56.3 & 60.1 & 66.5 & 55.8 & 51.7 & 61.9 & 57.4 & 48.6 & 59.7 & 58.1 \\
				UAN \cite{you2019universal} & $\times$ & $\times$ & $\checkmark$ & 34.7 & 22.4 & 9.4 & 38.9 & 22.9 & 21.8 & 47.4 & 39.7 & 30.9 & 34.4 & 35.8 & 22.0 & 30.0 \\
				CMU \cite{fu2020learning} & $\times$ & $\times$ & $\checkmark$ & 55.0 & 57.0 & 59.0 & 59.3 & 58.2 & 60.6 & 59.2 & 51.3 & 61.2 & 61.9 & 53.5 & 55.3 & 57.6 \\
				ROS \cite{bucci2020effectiveness} & $\times$ & $\checkmark$ & $\times$ & 53.4 & 66.3 & \textbf{\color{bblue}73.9} & 54.9 & 62.3 & 66.9 & 58.7 & 50.4 & \textbf{\color{bblue}72.6} & 65.1 & 52.6 & \textbf{\color{bblue}72.0} & 62.4 \\
				DANCE \cite{saito2021universal} & $\times$ & $\checkmark$ & $\checkmark$ & 6.5 & 9.0 & 9.9 & 20.4 & 10.4 & 9.2 & 28.4 & 12.8 & 12.6 & 14.2 & 7.9 & 13.2 & 12.9 \\
				DCC$^\dagger$ \cite{li2021domain} & $\times$ & $\checkmark$ & $\checkmark$ & \textbf{\color{blush}66.7} & \textbf{\color{bblue}67.9} & 66.7 & 48.6 & \textbf{\color{bblue}66.5} & 63.7 & 54.9 & \textbf{\color{bblue}53.7} & 70.5 & 62.1 & \textbf{\color{bblue}58.2} & \textbf{\color{blush}72.4} & 61.9 \\
				OVANet \cite{saito2021ovanet} & $\times$ & $\checkmark$ & $\checkmark$ & 58.4 & 66.3 & 69.3 & \textbf{\color{bblue}60.3} & 65.1 & \textbf{\color{bblue}67.2} & 58.8 & 52.4 & 68.7 & \textbf{\color{blush}67.6} & \textbf{\color{blush}58.6} & 66.6 & \textbf{\color{bblue}63.3} \\
				SHOT \cite{liang2020we} & $\checkmark$ & $\checkmark$ & $\times$ & 37.7 & 41.8 & 48.4 & 56.4 & 39.8 & 40.9 & \textbf{\color{bblue}60.0} & 41.5 & 49.7 & 61.8 & 41.4 & 43.6 & 46.9 \\
				OSHT-SC$^\dagger$ \cite{feng2021open} & $\checkmark$ & $\checkmark$ & $\times$ & 40.9 & 32.3 & 40.8 & 30.6 & 23.8 & 24.2 & 49.8 & 31.8 & 40.2 & 31.3 & 46.8 & 46.1 & 36.6 \\
				UMAD & $\checkmark$ & $\checkmark$ & $\checkmark$ & \textbf{\color{bblue}59.2} & \textbf{\color{blush}71.8} & \textbf{\color{blush}76.6} & \textbf{\color{blush}63.5} & \textbf{\color{blush}69.0} & \textbf{\color{blush}71.9} & \textbf{\color{blush}62.5} & \textbf{\color{blush}54.6} & \textbf{\color{blush}72.8} & \textbf{\color{bblue}66.5} & 57.9 & 70.7 & \textbf{\color{blush}66.4} \\
				\bottomrule
			\end{tabular}
			$}
	\end{table*}
	
	\textbf{Baseline methods.}
	Using the code from each corresponding author, we implement several methods for comparisons with our method, including OSBP~\cite{saito2018open}, UAN~\cite{you2019universal}, CMU~\cite{fu2020learning}, ROS~\cite{bucci2020effectiveness}, DANCE~\cite{saito2020universal}, OVANet~\cite{saito2021ovanet}, DCC~\cite{li2021domain}, and SHOT~\cite{liang2020we}. 
	We adopt class-balanced source sampling on all these methods.
	We also compare our method with Inheritune~\cite{kundu2020towards}, USFDA~\cite{kundu2020universal}, and OSHT-SC~\cite{feng2021open} using available results.
	Note that, as for the traditional data-dependent methods, OSBP and ROS are proposed for open-set UDA (OSDA), UAN and CMU and DCC are proposed for open-partial-set UDA (OPDA), and DANCE and OVANet are recently proposed to solve both UDA scenarios.  
	SHOT is a popular source data-free method for closed-set UDA but can also be used as a baseline for open-set UDA, Inheritune and OSHT-SC are methods for open-set UDA, and USFDA is proposed to solve open-partial-set UDA. 
	For comprehensive comparisons, we classify the above methods according to their applicable UDA scenarios, \ie, OSDA and OPDA, and whether they are source data-free (SDF). 
	Methods with $^\dagger$ denotes results copied from the original papers or OVANet \cite{saito2021ovanet}.
	Further, full results of DCC \cite{li2021domain} are kindly provided by the authors.
	
	\textbf{Implementation details.}
	All the experiments are conducted via \textbf{Pytorch} \cite{paszke2019pytorch}.
	We adopt mini-batch SGD to learn the feature encoder by fine-tuning from the ImageNet pre-trained \textbf{ResNet-50} model with the learning rate 0.001, and new layers (bottleneck layer and classification layer) from scratch with the learning rate of 0.01.
	Batch Normalization \cite{ioffe2015batch} is employed in the bottleneck layer.
	For training in both domains, we set the maximum number of iteration as 3,000, and use the suggested training settings in \cite{liang2020we}, including learning rate scheduler, momentum (0.9), weight decay (1e-3), bottleneck size (256), and batch size (64).
	
	\textbf{Evaluation metrics.}
	Generally, we report the mean value over three random trials. 
	For both open-set and open-partial-set UDA tasks, we report the HOS score \cite{fu2020learning,bucci2020effectiveness}, which is a harmonic mean of accuracy over known samples and accuracy over unknown samples, in the main text.
	Denote $\text{acc}_{kn}$ the average per-class accuracy over the known classes and $\text{acc}_{ukn}$ the accuracy of the unknown class. 
	HOS score can be easily obtained by
	\begin{equation}
		HOS=2\times \frac{\text{acc}_{kn}\times \text{acc}_{ukn}}{\text{acc}_{kn}+\text{acc}_{ukn}}.
		\label{eq:hos}
	\end{equation} 
	The HOS score is considered fair to balance the rejection of unknown samples and the recognition of known samples.

	\setlength{\tabcolsep}{4.0pt}
	\begin{table*}[!htb]
		\centering
		\caption{HOS (\%) averaged over three runs of each method on \textbf{Office}, \textbf{VisDA-C}, and \textbf{DomainNet} for OPDA.} 
		\label{tab:office-opda}
		\resizebox{0.99\textwidth}{!}{$
			\begin{tabular}{lccc|cccccca|a|cccccca}
				\toprule
				Method & SDF & OSDA & OPDA & A$\to$D & A$\to$W & D$\to$A & D$\to$W & W$\to$A & W$\to$D & Avg. & VisDA-C & P$\to$R & P$\to$S & R$\to$P & R$\to$S & S$\to$P & S$\to$R & Avg. \\
				\midrule
				OSBP \cite{saito2018open} & $\times$ & $\checkmark$  & $\times$ & 61.7 & 59.5 & 62.5 & 62.4 & 60.6 & 65.1 & 62.0 & 37.7 & 52.2 & 35.0 & 46.5 & 35.8 & 38.6 & 52.1 & 43.4 \\
				UAN \cite{you2019universal} & $\times$ & $\times$ & $\checkmark$ & 52.0 & 53.2 & 70.8 & 76.0 & 64.8 & 64.1 & 63.5 & 34.8 & 0.1 & 28.5 & 37.7 & 31.4 & 31.4 & 12.2 & 23.6 \\
				CMU \cite{fu2020learning} & $\times$ & $\times$ & $\checkmark$ & 76.9 & 74.1 & 82.8 & 88.5 & 79.2 & 85.2 & 81.1 & 32.9 & 50.5 & 43.9 & 49.4 & \textbf{\color{bblue}43.5} & 44.0 & 48.1 & 46.6 \\
				ROS~\cite{bucci2020effectiveness} & $\times$ & $\checkmark$ & $\times$ & 29.8 & 26.8 & \textbf{\color{bblue}86.4} & 86.6 & 83.9 & \textbf{\color{bblue}96.0} & 68.3 & 30.3 & 20.5 & 30.0 & 36.9 & 28.7 & 19.9 & 23.2 & 26.5 \\
				DANCE~\cite{saito2021universal} & $\times$ & $\checkmark$ & $\checkmark$ & 78.8 & 72.0 & 78.8 & \textbf{\color{bblue}91.6} & 72.3 & 91.7 & 80.9 & 3.9& 38.8 & 43.8 & 48.1 & \textbf{\color{blush}43.8} & 39.4 & 20.9 & 39.1 \\
				DCC$^\dagger$ \cite{li2021domain} & $\times$ & $\checkmark$ & $\checkmark$ & \textbf{\color{blush}88.5} & 78.5 & 70.2 & 79.3 & 75.9 & 88.6 & 80.2 & 43.0 & \textbf{\color{bblue}56.9} & 43.7 & \textbf{\color{bblue}50.3} & 43.3 & \textbf{\color{bblue}44.9} & \textbf{\color{bblue}56.1} & \textbf{\color{bblue}49.2} \\
				OVANet \cite{saito2021ovanet} & $\times$ & $\checkmark$ & $\checkmark$ & 84.0 & \textbf{\color{bblue}79.4} & 75.1 & \textbf{\color{blush}95.8} & 82.1 & 95.8 & \textbf{\color{bblue}85.4} & \textbf{\color{bblue}44.7} & 55.8 & \textbf{\color{blush}45.5} & \textbf{\color{blush}51.0} & 43.2 & \textbf{\color{blush}46.7} & \textbf{\color{blush}56.7} & \textbf{\color{blush}49.8} \\
				USFDA$^\dagger$ \cite{kundu2020universal} & $\checkmark$ & $\times$ & $\checkmark$ & \textbf{\color{bblue}85.5} & \textbf{\color{blush}79.8} & 83.2 & 90.6 & \textbf{\color{bblue}88.7} & 81.2 & 84.8 & - & - & - & - & - & - & - & -\\
				SHOT \cite{liang2020we} & $\checkmark$ & $\checkmark$ & $\times$ & 73.5 & 67.2 & 59.3 & 88.3 & 77.1 & 84.4 & 74.9 & 44.0 & 35.0 & 30.8 & 37.2 & 28.3 & 31.9 & 32.2 & 32.6 \\
				UMAD & $\checkmark$ & $\checkmark$ & $\checkmark$ & 
				79.1 & 77.4 & \textbf{\color{blush}87.4} & 90.7 & \textbf{\color{blush}90.4} & \textbf{\color{blush}97.2} & \textbf{\color{blush}87.0} & \textbf{\color{blush}58.3} & \textbf{\color{blush}59.0} & \textbf{\color{bblue}44.3} & 50.1 & 42.1 & 32.0 & 55.3 & 47.1 \\
				\bottomrule
			\end{tabular}
			$}
	\end{table*}
	
	\setlength{\tabcolsep}{2.0pt}
	\begin{table*}[]
		\centering
		\caption{HOS (\%) averaged over three runs of each method on \textbf{Office-Home} for OPDA.} 
		\label{tab:home-opda}
		\resizebox{0.99\textwidth}{!}{$
			\begin{tabular}{lccc|cccccccccccca}
				\toprule
				Method & SDF & OSDA & OPDA & Ar$\to$Cl & Ar$\to$Pr & Ar$\to$Re & Cl$\to$Ar & Cl$\to$Pr & Cl$\to$Re & Pr$\to$Ar & Pr$\to$Cl & Pr$\to$Re & Re$\to$Ar & Re$\to$Cl & Re$\to$Pr &Avg. \\
				\midrule
				OSBP \cite{saito2018open} & $\times$ & $\checkmark$ & $\times$ & 51.0 & 55.2 & 71.6 & 51.4 & 48.3 & 61.2 & 55.5 & 50.7 & 63.4 & 52.6 & 49.7 & 54.3 & 55.4 \\
				UAN \cite{you2019universal} & $\times$ & $\times$ & $\checkmark$ & 38.7 & 32.1 & 30.6 & 45.2 & 32.8 & 35.1 & 54.9 & 42.0 & 47.0 & 51.5 & 50.8 & 43.9 & 42.1 \\
				CMU \cite{fu2020learning} & $\times$ & $\times$ & $\checkmark$ & 57.3 & 59.9 & 66.9 & 64.4 & 59.0 & 64.0 & 65.6 & \textbf{\color{bblue}56.5} & 68.6 & 67.2 & 60.2 & 65.1 & 62.9 \\
				ROS \cite{bucci2020effectiveness} & $\times$ & $\checkmark$ & $\times$ & 54.0 & \textbf{\color{blush}77.7} & \textbf{\color{blush}85.3} & 62.1 & \textbf{\color{bblue}71.0} & \textbf{\color{blush}76.4} & 68.8 & 52.4 & \textbf{\color{blush}83.2} & 71.6 & 57.8 & \textbf{\color{blush}79.2} & 70.0 \\
				DANCE \cite{saito2021universal} & $\times$ & $\checkmark$ & $\checkmark$ & 34.1 & 23.9 & 38.3 & 46.7 & 21.6 & 35.4 & 58.2 & 47.5 & 39.4 & 32.8 & 38.3 & 43.1 & 38.3 \\
				DCC$^\dagger$ \cite{li2021domain} & $\times$ & $\checkmark$ & $\checkmark$ & \textbf{\color{blush}80.8} & 73.8 & 80.8 & 63.7 & \textbf{\color{blush}71.7} & 69.3 & \textbf{\color{blush}73.5} & 53.6 & 80.2 & \textbf{\color{bblue}74.4} & 57.1 & 76.8 & 69.8 \\
				OVANet \cite{saito2021ovanet}  & $\times$ & $\checkmark$ & $\checkmark$ & 59.7 & \textbf{\color{bblue}76.9} & 80.0 & \textbf{\color{bblue}68.8} & 69.1 & \textbf{\color{bblue}76.2} & \textbf{\color{bblue}69.6} & \textbf{\color{blush}56.9} & \textbf{\color{bblue}81.0} & \textbf{\color{blush}75.5} & \textbf{\color{blush}62.0} & \textbf{\color{bblue}78.6} & \textbf{\color{blush}71.2} \\
				SHOT \cite{liang2020we}  & $\checkmark$ & $\checkmark$ & $\times$ & 32.9 & 29.5 & 39.6 & 56.8 & 30.1 & 41.1 & 54.9 & 35.4 & 42.3 & 58.5 & 33.5 & 33.3 & 40.7 \\
				UMAD & $\checkmark$ & $\checkmark$ & $\checkmark$ & \textbf{\color{bblue}61.1} & 76.3 & \textbf{\color{bblue}82.7} & \textbf{\color{blush}70.7} & 67.7 & 75.7 & 64.4 & 55.7 & 76.3 & 73.2 & \textbf{\color{bblue}60.4} & 77.2 & \textbf{\color{bblue}70.1} \\
				\bottomrule
			\end{tabular}
			$}
	\end{table*}

	\subsection{Results}
	\textbf{Open-set UDA.} 
	As for open-set UDA (OSDA), we compare our method with source data-free OSDA methods including SHOT, Inheritune, and OSHT-SC and make comparisons with data-dependent methods including universal methods like DANCE and OVANet, OSDA methods like OSBP and ROS, and OPDA methods like UAN and CMU. 
	We report the results on \textbf{Office} and \textbf{VisDA-C} in Table~\ref{tab:office-oda} and the results on \textbf{Office-Home} in Table~\ref{tab:home-oda}. 
	As shown in Table~\ref{tab:office-oda} and Table~\ref{tab:home-oda}, on OSDA tasks, methods designed for OPDA, \ie, UAN and CMU, generally underperform methods taking into account the OSDA scenario like ROS and OVANet. 
	This observation demonstrates the motivation of our work, \ie, the scenario-tailored UDA method tends to be suffering under other DA scenarios and it is desirable to devise universal methods for DA with unknown target samples.
	Making comparisons among all methods, we find that our method achieves the best on two benchmarks, \ie, 66.3\% on the challenging 12-task benchmark \textbf{Office-Home} and 80.2\% on the large-scale benchmark \textbf{VisDA-C}, and only underperforms the data-dependent method OVANet and the source data-free OSDA method OSHT-SC on the standard benchmark \textbf{Office}, which verifies the effectiveness of our method. Furthermore, our method is the only one that consistently achieves the leading performance on all three benchmarks, unlike other state-of-the-art methods. Specifically, the best data-dependent universal DA method OVANet only achieves a HOS score of 56.1\% on \textbf{VisDA-C} and the state-of-the-art source data-free OSDA method OSHT-SC only obtains a HOS score of 36.6\% on \textbf{Office-Home}, both of which are far worse than the corresponding HOS scores of our method. 	
	
	\begin{figure*}[!htb]
		\centering
		\small
		\setlength\tabcolsep{1mm}
		\renewcommand\arraystretch{0.1}
		\begin{tabular}{cccc}
			\includegraphics[width=0.24\linewidth,trim={3.3cm 9.2cm 4.2cm 9.8cm}, clip]{./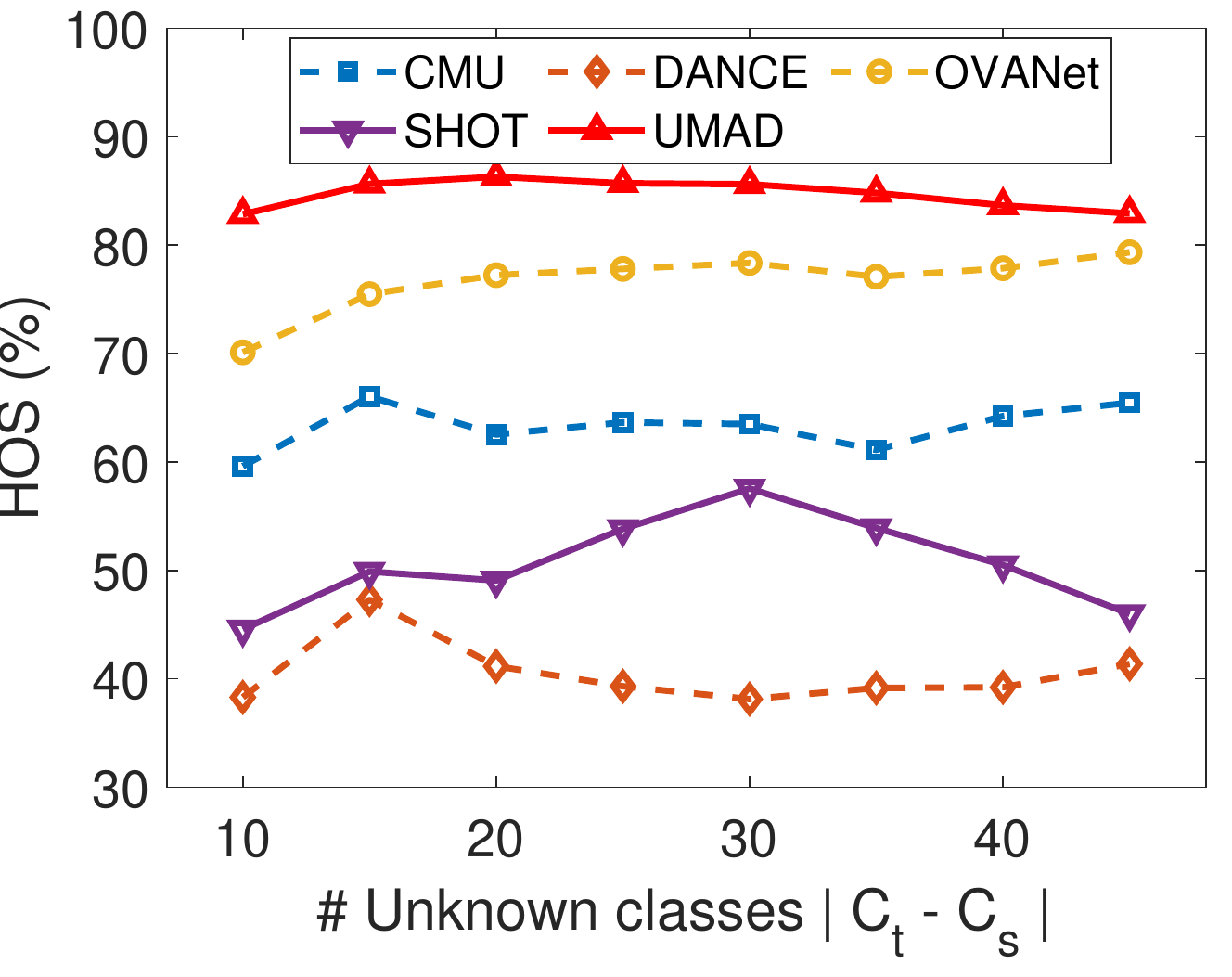} &
			\includegraphics[width=0.24\linewidth,trim={3.3cm 9.2cm 4.2cm 9.8cm}, clip]{./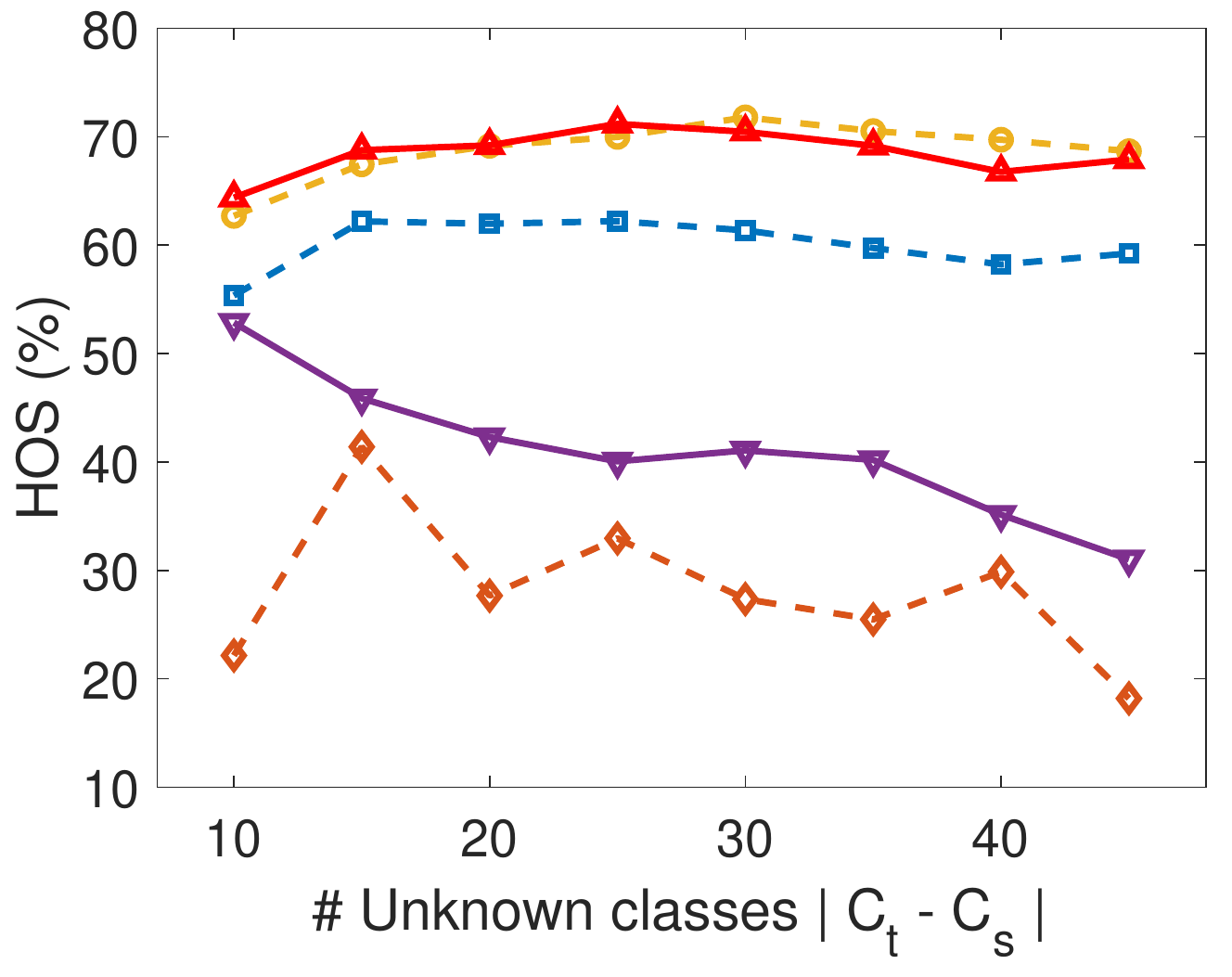} & 
			\includegraphics[width=0.24\linewidth,trim={3.3cm 9.2cm 4.2cm 9.8cm}, clip]{./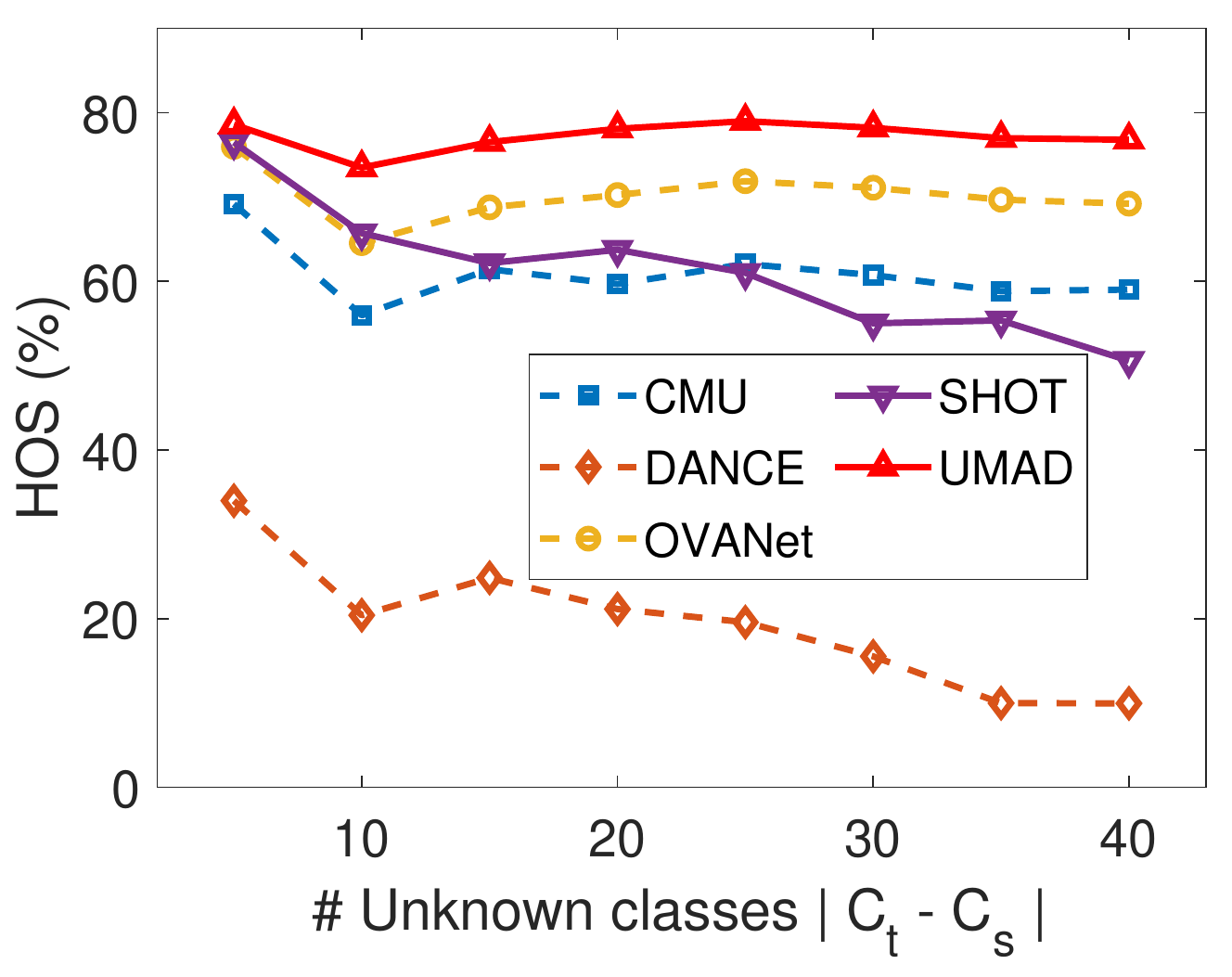} &
			\includegraphics[width=0.24\linewidth,trim={3.3cm 9.2cm 4.2cm 9.8cm}, clip]{./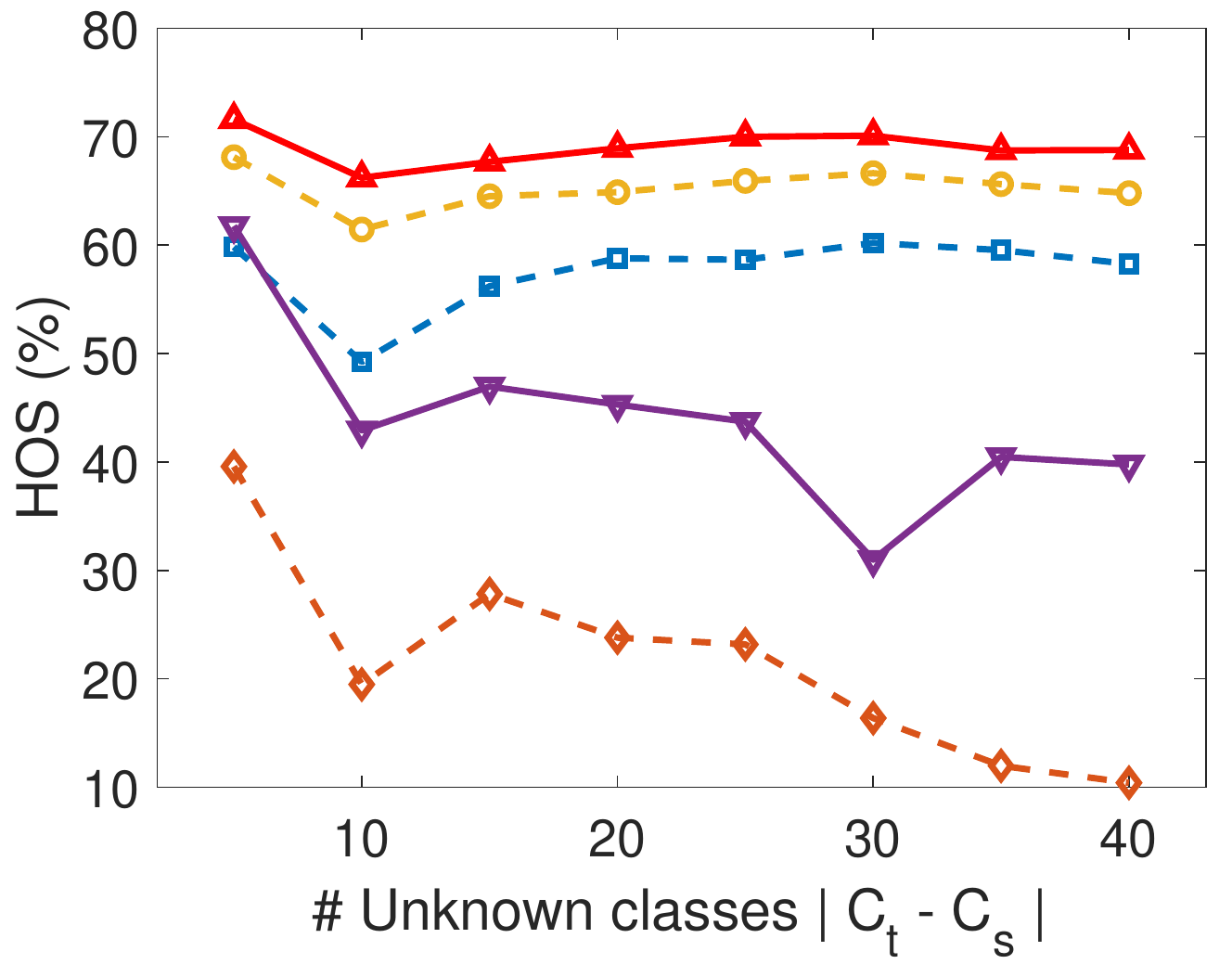} \\
			~\\
			(a) Ar $\to$ Re on OPDA & (b) Cl $\to$ Pr on OPDA & (c) Ar $\to$ Re on OSDA & (d) Cl $\to$ Pr on OSDA
		\end{tabular}
		\caption{\textbf{HOS (\%) of open-partial domain adaptation (OPDA) and open-set domain adaptation (OSDA)}. We vary the number of unknown classes (\ie, $\|\mathcal{C}_t-\mathcal{C}_s\|$) of four different transfer tasks on the \textbf{Office-Home} dataset.}
		\label{fig:par_num}
	\end{figure*} 
	
	\textbf{Open-partial-set UDA.} As for open-partial-set UDA (OPDA), similarly, we compare our methods with various scenario-specific or universal UDA methods. We report the results on \textbf{Office}, \textbf{VisDA-C}, and \textbf{DomainNet} in Table~\ref{tab:office-opda} and the results on \textbf{Office-Home} in Table~\ref{tab:home-opda}. As shown in Table~\ref{tab:office-opda} and Table~\ref{tab:home-opda}, we have a similar observation that OPDA methods tend to outperform OSDA methods on various OPDA benchmarks, \eg, CMU significantly beats the state-of-the-art OSDA method ROS on 3 out of all 4 benchmarks. Our method still achieves consistently leading performance on all benchmarks. 
	To be specific, our method achieves the best on two benchmarks, \ie, 87.0\% on \textbf{Office} and 58.3\% on \textbf{VisDA-C}, the third best HOS score of 48.0\% on the largest DA benchmark \textbf{DomainNet}, and the second best HOS score of 69.4\% on \textbf{Office-Home}.
	Besides, although slightly falling behind the state-of-the-art data-dependent universal UDA method OVANet on \textbf{DomainNet} and \textbf{Office-Home}, our method significantly outperforms OVANet on \textbf{VisDA-C} without accessing source data. 
	
	\textbf{Summary.} Comparing the results for OSDA and OPDA, we find that universal UDA methods like OVANet and UMAD are able to handle UDA scenarios with unknown category shifts, which makes them stand out among various scenario-specific UDA methods. 
	Besides the stable and superior performance, UMAD is more practical, in terms of better privacy-protection and lighter data transmission, than data-dependent methods with access to raw source data.

	\begin{figure*}[!htb]
		\centering
		\small
		\setlength\tabcolsep{1mm}
		\renewcommand\arraystretch{0.1}
		\begin{tabular}{cccc}
			\includegraphics[width=0.245\linewidth,trim={3.3cm 9.2cm 4.2cm 9.8cm}, clip]{./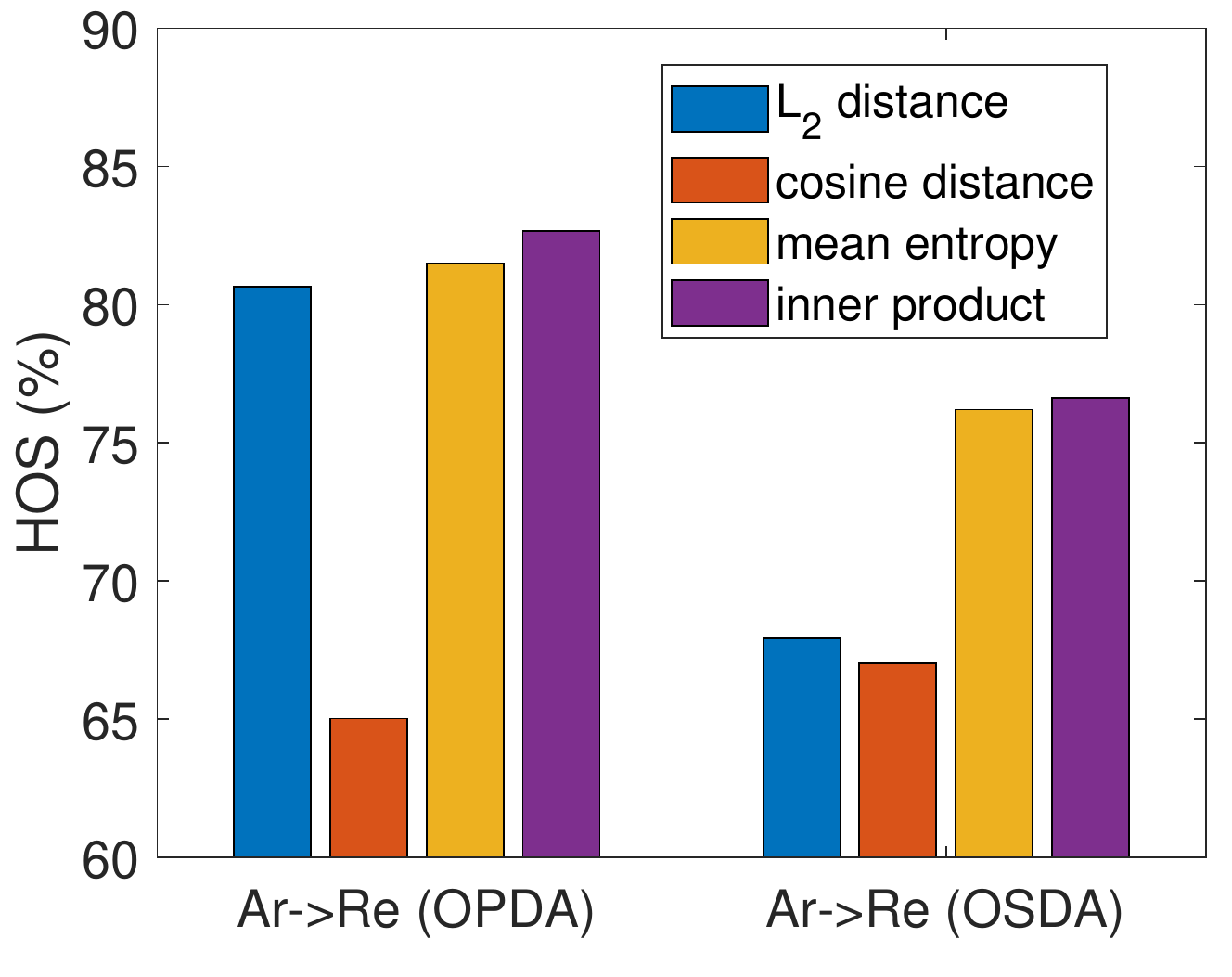} &
			\includegraphics[width=0.245\linewidth,trim={3.3cm 9.2cm 4.2cm 9.8cm}, clip]{./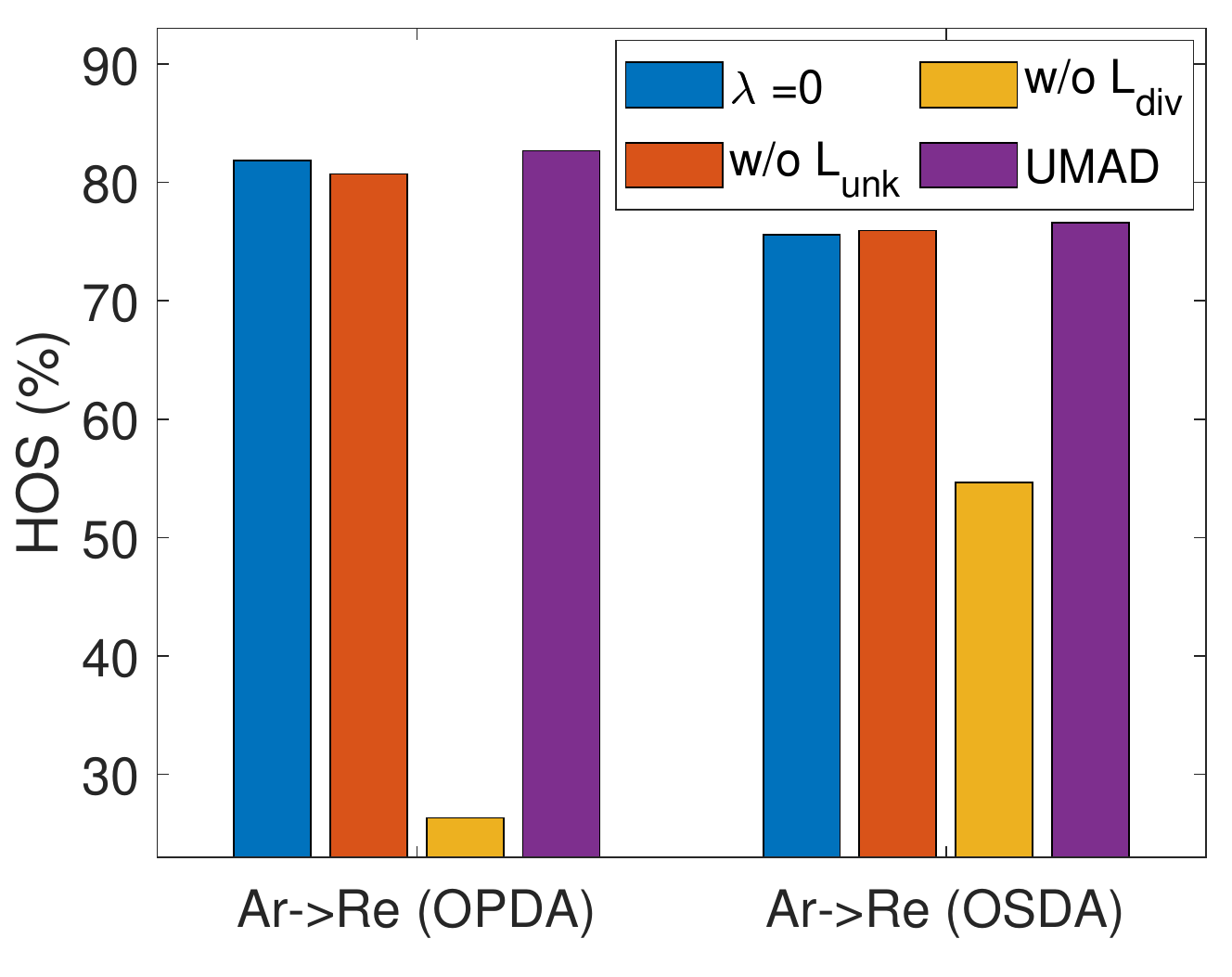} & 
			\includegraphics[width=0.245\linewidth,trim={3.3cm 9.2cm 4.2cm 9.8cm}, clip]{./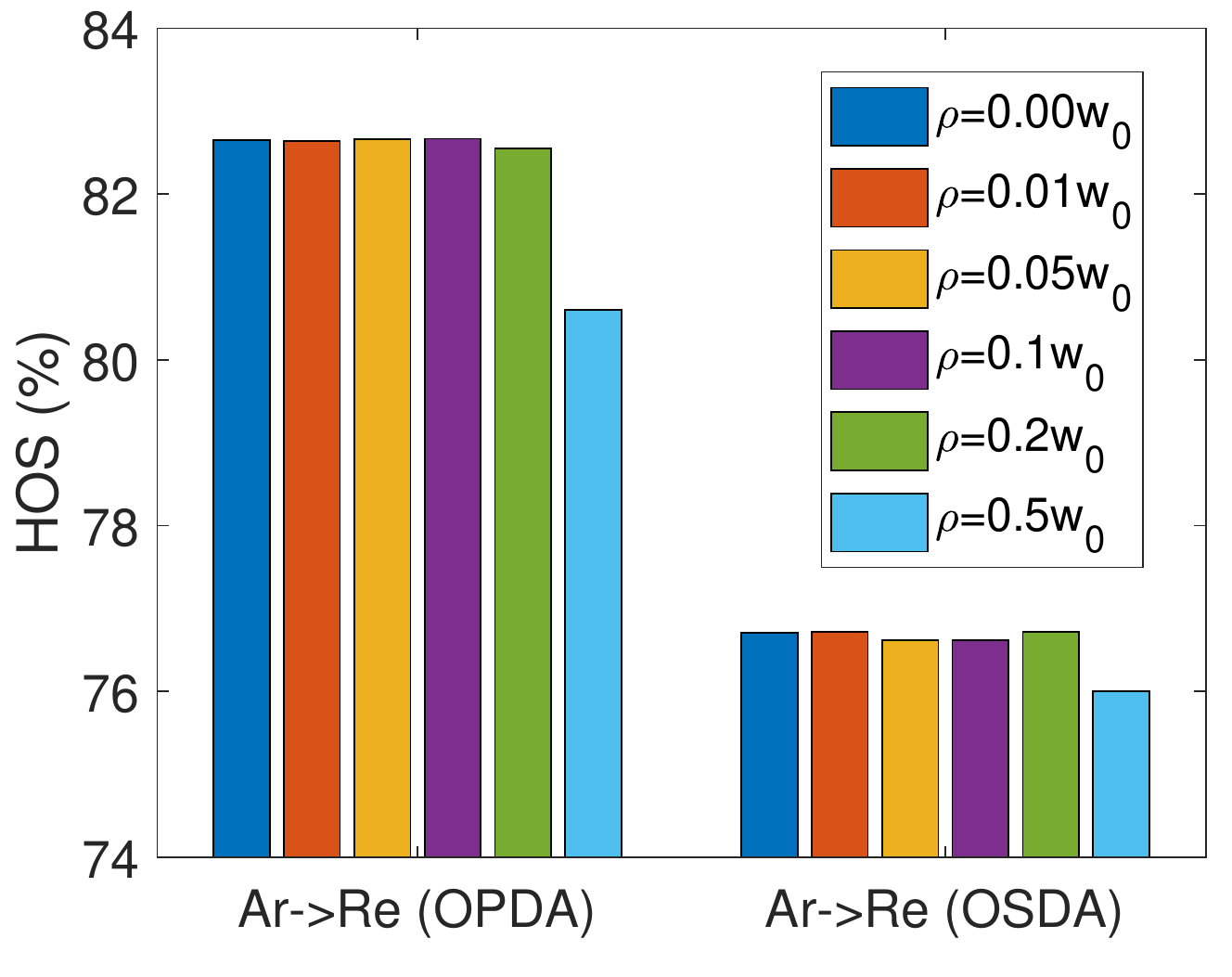} &
			\includegraphics[width=0.245\linewidth,trim={3.3cm 9.2cm 4.2cm 9.8cm}, clip]{./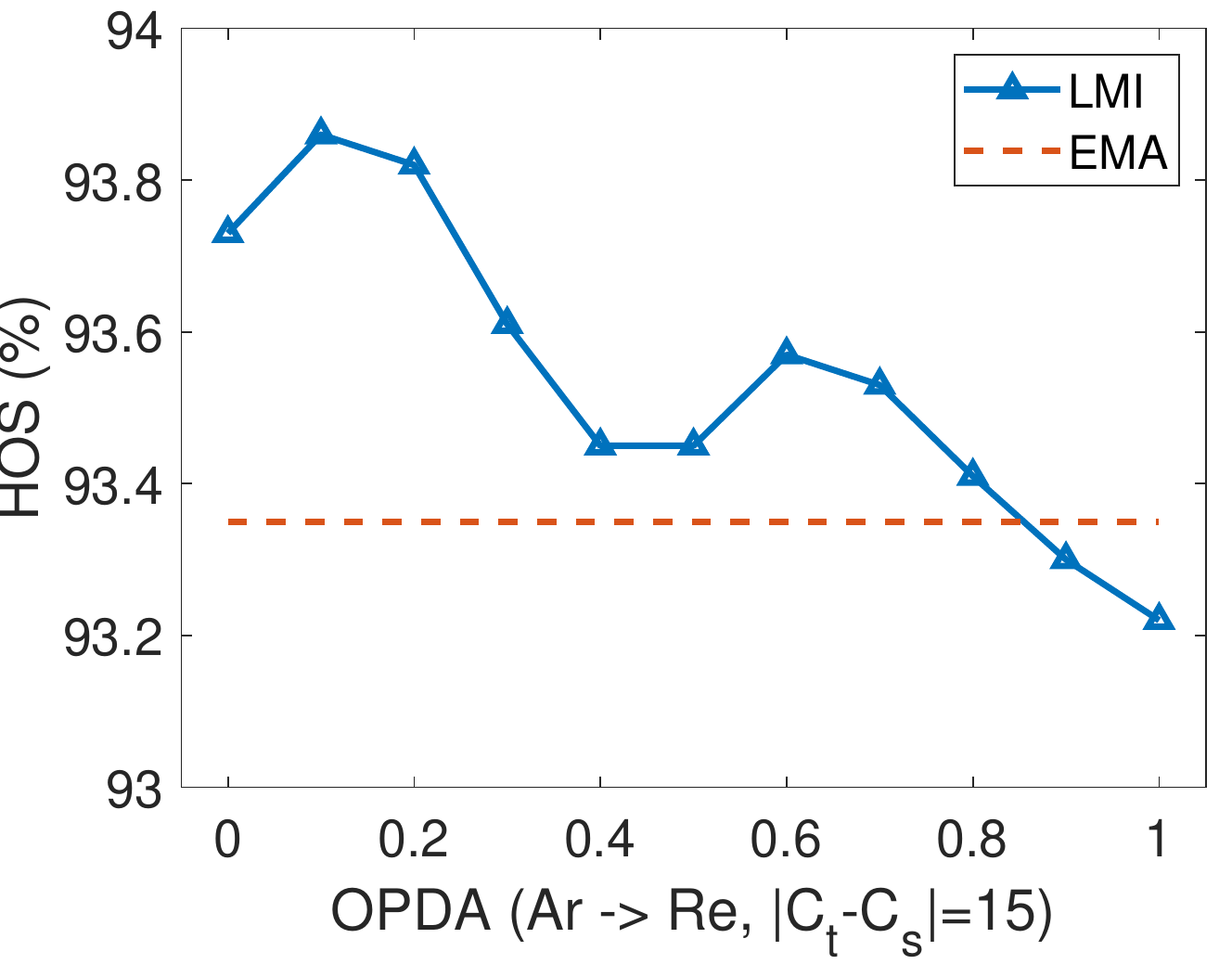} \\
			~\\
			(a) uncertainty score & (b) ablation on losses & (c) sensitivity to $\rho$ & (d) sensitivity to $T$
		\end{tabular}
		\caption{\textbf{Ablation study on different components of UMAD.} (a) shows results of other scores besides Eq.~(\ref{eq:score}), (b) shows the contribution of each component in the final objective, and (c-d) plot the sensitivity to two different parameters.}
		\label{fig:par_ab}
	\end{figure*} 
	
	\subsection{Analysis}
	\textbf{Varying unknown classes.} We compare UMAD with other methods with varying unknown classes $|\mathcal{C}_t - \mathcal{C}_{s}|$, and report results on \textbf{Office-Home} in Fig.~\ref{fig:par_num}. 
	The compared methods include universal UDA methods like DANCE and OVANet, OPDA method like CMU, and the source data-free OSDA method SHOT. 
	With an increasing number of unknown classes, UMAD stably outperforms most, if not all, other methods on both UDA scenarios, benefited from the proposed rejection mechanism. 
	
	\textbf{Choice of the uncertainty score.} We compare the performance of UMAD with different uncertainty scores in Fig.~\ref{fig:par_ab} (a), including the $L_2$ distance, the cosine distance, the mean entropy, and our proposed inner product distance. Results further demonstrate that only mean entropy and the inner product distance are effective and inner product consistently outperforms mean entropy on both UDA scenarios.
	
	\textbf{Ablation on losses.} We provide the ablation on main loss objectives in UMAD, including the two-classifier orthogonal constraint, the novel diversity term, and the entropic loss. Results in Fig.~\ref{fig:par_ab} (b) demonstrate the importance of each loss.  A significant decrease in the HOS score is expected with the removal of entropic loss. Because low accuracy of unknown samples will directly lead to low HOS score, according to Eq.~(\ref{eq:hos}).
	
	\textbf{Parameter sensitivity.} We study the parameter sensitivity of $\rho$ and $T$ in Fig.~\ref{fig:par_ab} (c-d). We study the ratio of the slack margin in a wide range, \ie, [0.00, 0.01, 0.05, 0.1, 0.2, 0.5], as shown in Fig.~\ref{fig:par_ab} (c). Results show that the performance of both UDA scenarios around the chosen $\rho$ is stable and may be better. In Fig.~\ref{fig:par_ab} (d), we study the flattening factor $T$ in the range of [0.0, 0.1, 0.2, 0.3, 0.4, 0.5, 0.6, 0.7, 0.8, 0.9, 1.0] and make comparisons with the vanilla EMA strategy~\cite{li2020rethinking}. The smaller $T$, the larger degree of flattening. Results show that within the value of $0.2$, the performance of UMAD is not sensitive to $T$.

	\setlength{\tabcolsep}{6.0pt}
	\begin{table}[!tb]
		\centering
		\caption{Avg. accuracy (\%) on \textbf{Office-Home} for partial-set UDA.} 
		\label{tab:office-distance}
		\begin{tabular}{lc}
			\toprule
			Method & Avg. \\
			\midrule
			MCC$^\dagger$~\cite{jin2020minimum} & 75.1 \\
			JUMBOT$^\dagger$~\cite{fatras2021unbalanced} & 75.5 \\
			BA$^3$US$^\dagger$~\cite{liang2020balanced} & 76.0 \\
			DCC$^\dagger$ \cite{li2021domain} & 73.0 \\
			DANCE~\cite{saito2021universal} & 71.1 \\
			\midrule
			No Adaptation & 64.4 \\
			UMAD ($w_0=0$, MI\cite{liang2020we}) & 72.9 \\
			UMAD ($w_0=0$, EMA\cite{li2020rethinking}) & \textbf{\color{bblue}78.0} \\
			UMAD ($w_0=0$, LMI) & \textbf{\color{blush}78.4} \\
			\bottomrule
		\end{tabular}
	\end{table}

	\textbf{Extension to partial-set UDA.} Besides the two UDA scenarios with unknown target samples investigated above, we ignore the rejection mechanism and apply UMAD to partial-set UDA \cite{liang2020balanced,jin2020minimum} on \textbf{Office-Home} and report the average accuracy across 12 tasks over three runs in Table~\ref{tab:office-distance}.
	We find our UMAD outperforms other data-dependent UDA methods including one of the state-of-the-art PDA methods---BA$^3$US \cite{liang2020balanced}. 
	Comparisons between different diversity terms show that our proposed LMI is the best and significantly outperforms the vanilla mutual information term~\cite{liang2020we} on partial-set UDA tasks.
	
	\textbf{Visualization of \textbf{iscore}.} 
	Fig.~\ref{fig:kde} (a) confirms that \textbf{iscore} in Eq.~(\ref{eq:score}) is meaningful and the devised ``synthesis" solution in Eq.~(\ref{eq:sys}) is reasonable.
	Fig.~\ref{fig:kde} (b) validates UMAD enlarges the discrepancy between known and unknown samples.
	
	\textbf{Limitations.}
	So far, it is hard to conduct closed-set UDA validation methods \cite{morerio2018minimal,you2019towards,saito2021tune} for UDA with unknown target samples. 
	Besides, UMAD works well for both open-set and open-partial-set scenarios, but it still needs adjustment when adapting to a closed-set or partial-set scenario.

	\begin{figure}[!tb]
		\centering
		\small
		\setlength\tabcolsep{1mm}
		\renewcommand\arraystretch{1.0}
		\begin{tabular}{c}
			\includegraphics[width=0.8\linewidth,trim={0.8cm 0.0cm 1.0cm 2.2cm}, clip]{./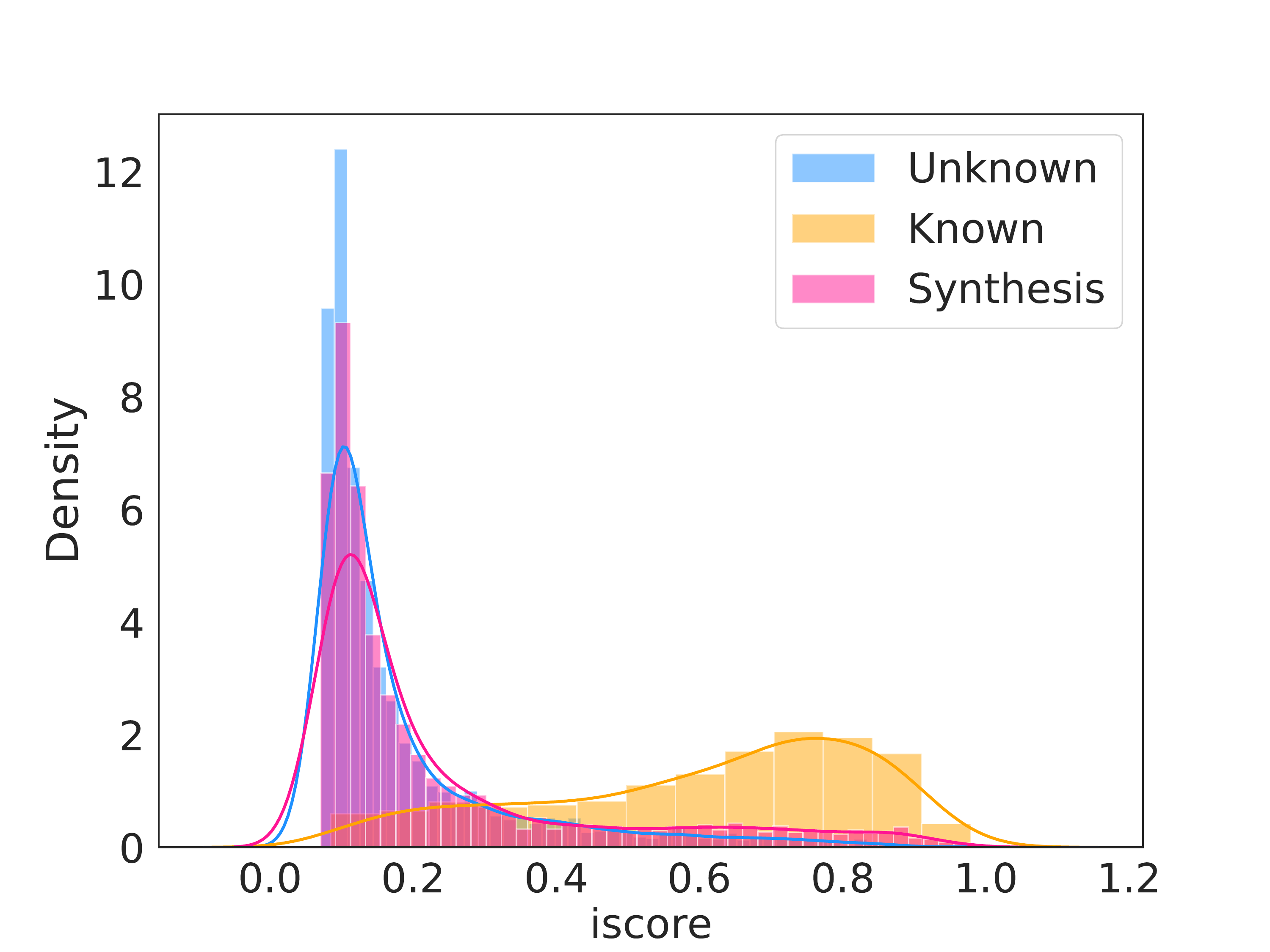} \\
			(a) No adaptation \\
			\includegraphics[width=0.8\linewidth,trim={0.8cm 0.0cm 1.0cm 2.2cm}, clip]{./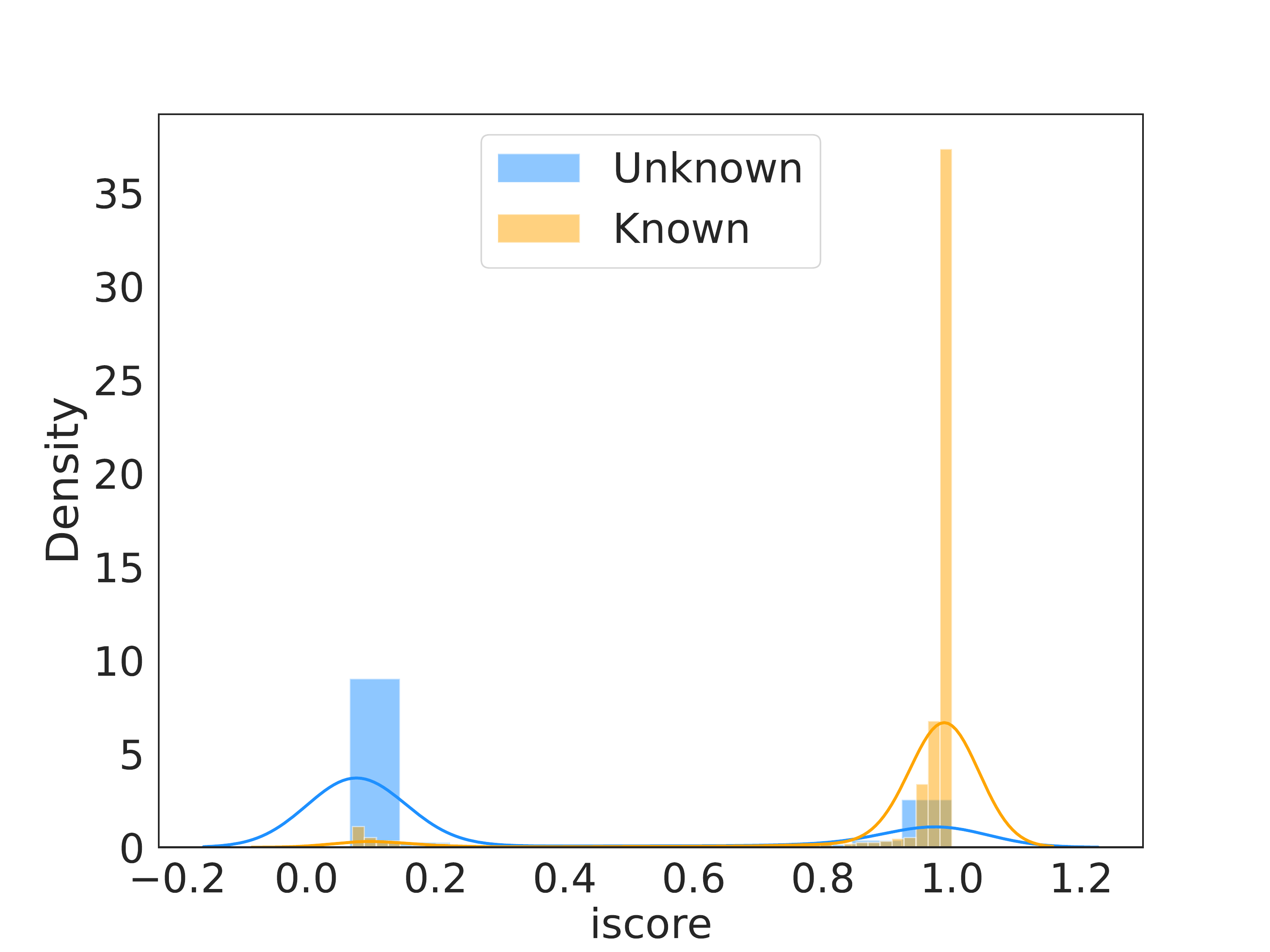} \\
			(b) UMAD \\
		\end{tabular}
		\caption{Histogram before/ after UMAD for Ar$\to$Re (OPDA).}
		\label{fig:kde}
	\end{figure} 

	\section{Conclusion}
	This paper investigated a more realistic open-set UDA setting where neither source data nor the prior about the label set overlap across domains is utilized during the target adaptation process.
	Using the trained model from the source domain, we proposed Universal Model ADaptation (UMAD).
	UMAD tackled category shift by devising an effective consistency score and an automatic thresholding scheme to detect samples from unknown classes.
	Besides, UMAD exploited a new localized mutual information objective to encourage target samples from known classes to fit the source model against domain shift.
	As a unified approach, UMAD has been validated to be effective across both open-set and open-partial-set UDA tasks on multiple datasets.
	As future work, UMAD can be extended to tackle complex tasks like segmentation and detection.

	\section{Notes}
	The early version of our manuscript (dubbed BATMAN) has been finished at Jan 2021. 
	Code to reproduce experimental results will be released at \url{https://github.com/tim-learn/UMAD}, questions are welcome.
	
	{\small
		\bibliographystyle{ieee_fullname}
		\bibliography{my}
	}
	
\end{document}